\newtheorem{theorem}{Theorem}
\newtheorem{lemma}{Lemma}
\newtheorem{corollary}{Corollary}
\newtheorem{definition}{Definition}
\newcommand{\asec}{$\mathsf{ASEC}$\xspace}
\newcommand{\iasec}{$\mathsf{IASEC}$\xspace}
\newcommand{\ese}{$\mathsf{ESE}$\xspace}
\title{Planning with Dynamically Estimated Action Costs}
\author{
    Eyal Weiss and Gal A. Kaminka
}
\begin{document}

\maketitle

\begin{abstract}
Information about action costs is critical for real-world AI planning applications. 
Rather than rely solely on declarative action models, recent approaches also use black-box external action cost estimators, often learned from data, that are applied during the planning phase. 
These, however, can be computationally expensive, and produce uncertain values. 
In this paper we suggest a generalization of deterministic planning with action costs that allows selecting between multiple estimators for action cost, to balance computation time against bounded estimation uncertainty. 
This enables a much richer---and correspondingly more realistic---problem representation. 
Importantly, it allows planners to bound plan accuracy, thereby increasing reliability, while reducing unnecessary computational burden, which is critical for scaling to large problems. 
We introduce a search algorithm, generalizing $A^*$, that solves such planning problems, and additional algorithmic extensions. 
In addition to theoretical guarantees, extensive experiments show considerable savings in runtime compared to alternatives.
\end{abstract}

\section{Introduction}
Information about action costs is critical for real-world AI planning applications. Examples of this need range from urban traffic management~\cite{mccluskey2017embedding}, through business process management~\cite{marrella2019automated},  integration in ubiquitous computing~\cite{georgievski2016automated},  spacecraft orbit planning~\cite{surovik2015heuristic}, machine tool calibration~\cite{parkinson2012application}, to general purpose robotics~\cite{cashmore2015rosplan, karpas2020automated}, and  cybersecurity~\cite{boddy05,kichkaylo10,obes10,hoffmann12}.

Cost information can be derived from models described in an appropriate language such as PDDL~\cite{mcdermott1998pddl,fox2003pddl2}. As we discuss in the motivating examples below, this is very useful, but not sufficient for some realistic applications. Naturally, the expressiveness of the modeling language restricts the models it can describe. 

A complementary recent approach allows general domain-independent planners to be coupled with domain-specific external modules which can provide cost information not easily represented in current modeling languages. Instead, the planners utilize black-box modules that provide action effects (and costs) during the planning process~\cite{dornhege2009semantic,gregory12,frances17,allen21}. 

A key challenge for planners using external cost estimation modules is the potentially substantial computation time required for the estimation. The next section details a motivating example in transportation, but here's an example from robot task and motion planning~\cite{garrett2021integrated}. In many cases, a domain-independent task planner is unable to reason about geometric and kinematic constraints of movement in a continuous environment. It therefore checks for the existence and cost of possible motion actions by calling on a motion planner. When this happens, the cost is computed with high accuracy, but its computation---even for a single potential action---is expensive.  One of course may utilize faster methods of estimating the action costs, but at the price of greater uncertainty in the computed value. This would lead to generating plans that have much lower quality with respect to the actual costs when executed.

This paper proposes to dynamically balance estimation quality and computation time during planning. See~\cite{weiss2022position} for an extended discussion of the matter. Given a bound on the uncertainty in the estimated cost of the generated plan, the algorithms presented in this paper apply black-box cost estimators \emph{on demand}, so as to minimize computational burden, while meeting the target uncertainty bound. 
This is especially relevant in the context of planning with data-driven models, as cost estimators learned from data almost always entail uncertainty. 
Explicitly considering the accumulated effect of modeling uncertainty on plan cost accuracy, by attempting to achieve a target error bound, increases robustness of plans found by planners w.r.t. modeling errors, which contributes to overall planning reliability.
Additionally, compromising on reasonable target bounds and trying to achieve them using minimum computational (modeling) overhead is crucial for scaling to large planning problems.
We make several contributions.

First, the paper presents a framework for planning with \emph{multiple action cost estimates}, as a generalization of deterministic planning. Each ground action can be associated with multiple black-box estimators, potentially ordered by increasing accuracy and computational expense. 
This framework provides an enhanced ability to represent and solve deterministic planning problems with action costs.

Second, we present \asec, a modified $A^*$ algorithm~\cite{hart1968formal} that allows a planner to dynamically select estimators to maintain a bound on the plan cost uncertainty, while attempting to minimize estimation time. We show that \asec is sound but \textit{in}complete in general, though under some conditions, it is a proper generalization of $A^*$ and then complete.
Additionally, we describe a post-search procedure that may reduce cost uncertainty for plans which do not meet the target bound. 

Finally, we provide empirical evidence for the efficacy of ASEC and associated algorithms. We implemented \asec and the other algorithms on top of the  Fast Downward planner~\cite{helmert-jair2006}. Experiments with 6600 planning problems generated from planning competition benchmarks show \asec and 
extensions offer significant savings in runtime compared to alternatives, while often maintaining
estimated costs well within target bounds.

\section{Motivating Example}
We consider the case of transportation logistics problems as a motivating example. 
Problems in this domain can be described using three action templates: drive, pick-up and drop. A typical problem deals with the terminal locations of packages and vehicles, while minimizing overall cost. Actions of pick-up and drop have similar (fixed) costs, thus the
main challenge is minimizing the cost of drive actions.  

The ``Transport Sequential'' domain from IPC-2011~\cite{coles2012survey} describes
the drive costs using the length of the roads traveled as the 
basis.  The action template of drive is given as follows.
\begin{quote}
	\begin{scriptsize}
		\begin{verbatim}
			(:action drive
			:parameters (?v - vehicle ?l1 ?l2 - location)
			:precondition (and (at ?v ?l1) (road ?l1 ?l2))
			:effect (and (not (at ?v ?l1)) (at ?v ?l2)
			(increase (total-cost) (road-length ?l1 ?l2))))
		\end{verbatim}
	\end{scriptsize}
\end{quote}

While this is a useful crude estimate, real-world planning in this domain takes many more
factors into account. For example, the truck fleet optimization company Trucknet~\cite{trucknet} reportedly takes into account freight weight and volume, drive time duration, CO2 emissions, and insurance costs---all in addition to the road length.  Naturally,
computing the cost estimate with these factors is a significant computational task.

Ideally, planning in this domain would utilize multiple cost estimators of varying accuracy (and potential estimation runtime). For example, a fast and crude estimator based primarily on fuel consumption derived from trip length, a more accurate model based on the factors above, and an even more accurate cost predictor based on real-time and historical traffic conditions data (see the technical appendix for further illustration).  

Unfortunately, no planner today is able to reason about how to effectively utilize the different action cost estimates, minimizing planning runtime. Existing planners use a single cost estimate---whether declarative or dynamically-given---for every ground action in the planning process.  

Planners for real-world problems utilize complex action cost models. Declarative representations may describe fast estimators, yet they are not well suited to describe complex mathematical functions. Moreover, some costs are only modeled in black-box form (e.g., via a learned model). Hence, planners must integrate external sources of information about action costs.

As realistic planning problem spaces tend to be very large, it is clear that action cost estimation has to be \emph{dynamic} (namely estimates should be computed during planning). To do this efficiently, one should utilize cheaper estimators (w.r.t. runtime) where possible. 
This paper is a first attempt to rigorously address this challenge.

\section{Preliminaries}\label{sec:pre}
We make use of standard deterministic planning notation based on state-transition systems.
\begin{definition}\label{def:trans_sys}
	A \emph{state-transition system} is a 4-tuple $\Sigma = (S,A,\gamma,c)$, where
	\begin{itemize}
		\item $S$ is a finite set of system states.
		\item $A$ is a finite set of actions that the agent can perform.
		\item $\gamma : S \times A \to S$ is the state-transition function.
		\item $c : S \times A \to [0, \infty)$ is the action cost function.
	\end{itemize}
\end{definition}

Considering classical planning, we assume a finite set of propositional atoms $P = \{p_0, \dots, p_{n-1}\}$.
A (partial) state is then defined by a (partial) assignment $s : P \to \{0,1\}^n$, where $S$ is the set of all (full) states.
An action $a \in A$ is a pair of partial variable assignments $\langle pre(a), eff(a) \rangle$.
An action $a$ is applicable in state $s$ iff $pre(a)$ is consistent with $s$, and in this case $c(s,a)<\infty$.
Application of $a$ in $s$ results in the state $s'=\gamma(s,a)$ that is consistent with $eff(a)$ and agrees with $s$ w.r.t. the values of all the variables in $P$ that do not appear in $eff(a)$.

With this interpretation, we call $\Sigma$ a \emph{planning domain}.
Although Def.~\ref{def:trans_sys} specifies that the cost is a function of both action and state, in practice it is almost always assumed to be solely a function of the action. We therefore use $c(a)$ as a shorthand for the cost of a ground action.
%
A \emph{plan} in $\Sigma$ is then a finite sequence of actions $\pi = \langle a_1,...,a_n \rangle$, where its length is $|\pi|=n$, and its cost is defined to be $cost(\pi)=\sum_{i=1}^{n}c(a_i)$.

\begin{definition}\label{def:planning_prob}
	A \emph{planning problem} is a triple $\mathcal{P}=(\Sigma,s_0,S_g)$, where $\Sigma$ is a planning domain, $s_0 \in S$ is an initial state, and $S_g \subset S$ is a set of goal states. A \emph{solution} for $\mathcal{P}$ is a plan $\pi$ such that $\gamma(s_0,\pi) \in S_g$ (i.e., ordered sequential application of the actions in $\pi$ transitions the system from the initial state to a goal state). 
	A \emph{cost optimal solution} is a plan $\pi^*$ that satisfies
	\begin{equation*}
		c^* := c(\pi^*) = \min\{c(\pi')~|~\pi'~\text{is a solution for}~\mathcal{P} \}.
	\end{equation*}
\end{definition}

We consider the directed graph that corresponds to a planning domain $\Sigma$.
Let $\mathcal{G}_\Sigma = (\mathcal{V}, \mathcal{E}, \Theta_\Sigma)$ be a \emph{weighted digraph}. $\mathcal{V}$ is a set of vertices, each corresponding to a state $s\in S$. $\mathcal{E}$ denotes a set of edges, such that $e=(s,\gamma(s,a)) \in \mathcal{E}$ iff $a \in A$ is an applicable action in $s \in S$. The weight (cost) $c(e)\in \Theta_\Sigma$ of each such edge is given by $c(e)=c(a)$.

\section{Cost Uncertainty and Dynamic Estimation}\label{sec:main_results}
We propose an alternate framework for defining planning problems with action costs, which admits multiple external cost estimators. 
We then show that such problems can be solved using cost-optimal planning techniques, after performing appropriate adjustments.
Lastly, we introduce a new algorithm that solves problems from this new class and analyze its properties.

\subsection{Mathematical Framework}\label{subsec:math_frame}

We consider a planning problem as defined in Def.~\ref{def:planning_prob}, with the following modification: $c(\cdot)$ is not assumed
to be precisely known. Instead, each precise cost $c(e)\in \Theta_\Sigma$ is replaced with a finite set of estimators,
denoted $\Theta(e)$ for brevity (Definition~\ref{def:theta}):

\begin{definition}\label{def:theta}
	The \emph{set of cost estimators for a planning domain} $\Sigma$, with graph~$\mathcal{G}_\Sigma$, is $\Theta_\Sigma = \{ \Theta(e)~|~e \in \mathcal{E} \}$, where
	\begin{equation*}
		\Theta(e)=\{\theta_1^e,\dots,\theta_{k(e)}^e\}, k(e) \in \mathbb{N}.
	\end{equation*}
	For every edge $e$ and every valid~$i$,~$\theta_i^e$ satisfies the following assumptions:
	\begin{enumerate}
		\item When used, it returns lower and upper bounds for the true cost, i.e.,~$c_{min}(\theta_i^e) \leq c(e) \leq c_{max}(\theta_i^e)$.\label{assum:bounds}
		\item $0<c(e)<\infty$ implies~$c_{min}(\theta_i^e)>0$, $c_{max}(\theta_i^e) < \infty$.\label{assum:informative}
	\end{enumerate}
\end{definition}

Note that w.l.o.g. we may suppose that each estimator can only contribute information.
Formally, this is obtained from Assumption~\ref{assum:bounds} by keeping the tightest bounds found when more than one estimator $\theta_i^e$ is utilized.  

Assumption~\ref{assum:informative} means that estimators provide at least some information through both of their bounds. 
We do not enforce a limitation on the best attainable estimate for each edge cost.  Thus, an estimator $\theta_i^e$ whose $c_{min}(\theta_i^e)=c_{max}(\theta_i^e)$ is the special case of known cost $c(e)$.
Finally, to allow strong generality, we let each estimator be characterized by an unknown finite estimation time $\tau_i^e$.


We are now ready to state the resulting planning problem.


\begin{definition}\label{def:new_problem}
	A \emph{planning problem with edge cost estimation} is a 4-tuple $\mathcal{P}=(\Sigma,\Theta_\Sigma,s_0,S_g)$, where $\Sigma$ is a planning domain with cost function $c$ unknown, the set of estimators $\Theta_\Sigma$ is as described in Def.~\ref{def:theta}, $s_0\in S$ is an initial state, and $S_g\subset S$ is a set of goal states. A \emph{solution} for $\mathcal{P}$ is a plan $\pi$ such that $\gamma(s_0,\pi) \in S_g$.
	An \emph{$\epsilon$-bounded cost optimal (or simply $\epsilon$-optimal) solution} is a plan $\pi^\epsilon$ that satisfies
	\begin{equation*}
		c(\pi^\epsilon) \le c^* \times \epsilon,
	\end{equation*}
	with $c^*$ being the optimal cost and~$\epsilon \ge 1$.
\end{definition}

\begin{theorem}\label{thm:general}
	(\emph{Generality}).
	A planning problem with edge cost estimation (defined in Def.~\ref{def:new_problem}) is a generalization of a classical cost optimal planning problem (defined in Def.~\ref{def:planning_prob}).
\end{theorem}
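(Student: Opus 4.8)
The plan is to prove generality by exhibiting a canonical embedding of every classical cost-optimal planning problem (Def.~\ref{def:planning_prob}) into the class of planning problems with edge cost estimation (Def.~\ref{def:new_problem}), and then showing that under this embedding the two solution concepts coincide. Since a ``generalization'' here means that the classical setting is recovered as a special case, it suffices to (i) map each classical problem to an estimation problem in a way that respects the formal requirements of Def.~\ref{def:theta}, and (ii) verify that the set of (optimal) solutions is preserved.

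First I would fix a classical problem $\mathcal{P}=(\Sigma,s_0,S_g)$ with known cost function $c$ and associated graph $\mathcal{G}_\Sigma$, and construct the estimation problem $\mathcal{P}'=(\Sigma,\Theta_\Sigma,s_0,S_g)$ that keeps $\Sigma$, $s_0$, and $S_g$ unchanged and, for every edge $e\in\mathcal{E}$, assigns the single ($k(e)=1$) degenerate estimator $\theta_1^e$ with $c_{min}(\theta_1^e)=c_{max}(\theta_1^e)=c(e)$. This is exactly the ``known cost'' special case flagged immediately after Def.~\ref{def:theta}. I would then check that $\theta_1^e$ satisfies the two assumptions of Def.~\ref{def:theta}: Assumption~\ref{assum:bounds} is immediate since $c(e)\le c(e)\le c(e)$; Assumption~\ref{assum:informative} holds because whenever $0<c(e)<\infty$ we get $c_{min}(\theta_1^e)=c(e)>0$ and $c_{max}(\theta_1^e)=c(e)<\infty$ (for $c(e)=0$ the hypothesis is vacuous, while edges never carry infinite cost, as they correspond to applicable actions, for which $c(s,a)<\infty$). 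Hence $\mathcal{P}'$ is a well-formed instance of Def.~\ref{def:new_problem}.

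Next I would argue that the two problems have identical solution sets and identical optimal costs. Because $\gamma$, $s_0$, and $S_g$ are untouched, a plan $\pi$ satisfies $\gamma(s_0,\pi)\in S_g$ in $\mathcal{P}$ iff it does in $\mathcal{P}'$, so the solution sets coincide. Moreover, each estimator returns the exact cost, so the plan cost measured in $\mathcal{P}'$ equals $cost(\pi)=\sum_{i}c(a_i)$, and in particular the optimal cost $c^*$ is the same in both. Finally, instantiating the $\epsilon$-optimality criterion of Def.~\ref{def:new_problem} at $\epsilon=1$ yields the requirement $c(\pi^\epsilon)\le c^*\times 1=c^*$; since $c^*$ is the minimum achievable cost this forces equality, so the $1$-optimal solutions of $\mathcal{P}'$ are precisely the cost-optimal solutions of $\mathcal{P}$. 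This establishes that the classical problem is recovered as the special case of point-valued estimators with $\epsilon=1$.

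The main obstacle I anticipate is not computational but definitional: making sure the degenerate construction genuinely meets every clause of Def.~\ref{def:theta} (this is the verification the paper itself signals as needing care), and pinning down the exact sense of ``generalization''---namely that it is the recovery of the classical solution concept, and not mere expressibility, that must be demonstrated. If strictness of the generalization is also desired, I would close with a one-line remark exhibiting an estimation problem possessing an estimator with $c_{min}(\theta_i^e)<c_{max}(\theta_i^e)$, which cannot arise from any classical problem under the embedding, confirming that the generalization is proper.
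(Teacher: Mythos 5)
Your proposal is correct and follows exactly the paper's own argument: embed each classical problem as the special case with a single degenerate estimator per edge satisfying $c_{min}(\theta_1^e)=c(e)=c_{max}(\theta_1^e)$ and take $\epsilon=1$. Your additional verifications (that the degenerate estimators meet both assumptions of Def.~\ref{def:theta} and that the solution concepts coincide at $\epsilon=1$) merely make explicit what the paper leaves implicit.
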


\begin{proof}
\emph{Any} classical cost optimal planning problem can be formulated as a planning problem with edge cost estimation (Def.~\ref{def:new_problem}),  by considering the special case where each edge has one estimator (i.e.,~$k(e)=1$ for every~$e$), that returns the true cost (namely,~$c_{min}(\theta_1^e) = c(e) = c_{max}(\theta_1^e)$), with no relaxation for plan optimality (which means that~$\epsilon=1$).
\end{proof}


\subsection{Uncertainty Quantification}\label{subsec:uncer}
We provide a few more definitions necessary for stating our theoretical results.
For a given problem $\mathcal{P}$, let $\Phi \subseteq \Theta_\Sigma$ contain at least one estimator for every edge $e$ (i.e., $|\Theta(e)| \ge 1~\forall e \in \mathcal{E}$).
The \emph{plan lower bound} of $\pi = \langle a_1,...,a_n \rangle$ w.r.t. $\Phi$ is defined as
\begin{equation}\label{eq:plan_lower_bound}
c_{min}^{\Phi}(\pi):=\sum_{i=1}^{n}c_{min, \Phi}^i,
\end{equation}
where $c_{min, \Phi}^i$ corresponds to the tightest lower bound estimate of~$a_i$ obtained from $\Phi$.
The \emph{optimal plan lower bound} w.r.t. $\Phi$, denoted $c_{\Phi}^*$, is the minimal plan lower bound w.r.t. $\Phi$ over all plans for $\mathcal{P}$.
An \emph{optimal plan} w.r.t. $\Phi$ is a plan $\pi$ that satisfies $c_{min}^{\Phi}(\pi) = c_{\Phi}^*$.

We consider the consequences of edge cost uncertainty, characterized by lower and upper bounds, on the total plan cost uncertainty. 
Suppose that for a problem $\mathcal{P}$, as defined in Def.~\ref{def:new_problem}, we are given a plan~$\pi = \langle a_1,...,a_n \rangle$, 
with bounds~$c_{min}^i, c_{max}^i$ corresponding to the action~$a_i$, for every~$i \in [1,n]$.
Then the cost of~$\pi$ satisfies
\begin{equation}\label{eq:plan_bounds}
	\sum_{i=1}^{n}c_{min}^i \leq c(\pi) \leq \sum_{i=1}^{n}c_{max}^i,
\end{equation}
and its overall uncertainty can be further described by the effective ratio of the its bounds:
\begin{equation}\label{eq:eta_eff}
	\eta_{eff}(\pi) := \frac{\sum_{i=1}^{n}c_{max}^i}{\sum_{i=1}^{n}c_{min}^i},
\end{equation}
where $\eta_{eff}(\pi)$ is defined to be $1$ in case $\sum_{i=1}^{n}c_{min}^i=0$, as this implies $c(\pi)=0$ (due to Assumption~\ref{assum:bounds}), meaning that there is no uncertainty.
\noindent The following result is an implication of the above formulas.
\begin{theorem}\label{thm:bound}
	(\emph{Bound}).
	Given a problem $\mathcal{P}$ as in Def.~\ref{def:new_problem}, an optimal plan $\pi$ w.r.t. any $\Phi \subseteq \Theta_\Sigma$, satisfies the following relation for its cost:
	\begin{equation}\label{eq:bound}
		c(\pi) \leq c^* \times \eta_{eff}(\pi).
	\end{equation} 
\end{theorem}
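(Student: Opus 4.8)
The plan is to chain two inequalities: one that bounds $c(\pi)$ from above in terms of its $\Phi$-lower bound via the factor $\eta_{eff}(\pi)$, and one that bounds that $\Phi$-lower bound from above by the true optimal cost $c^*$ using the $\Phi$-optimality of $\pi$. First I would dispose of the degenerate case: if $\sum_{i=1}^{n}c_{min}^i = 0$, then by Assumption~\ref{assum:bounds} we have $c(\pi)=0$ and by convention $\eta_{eff}(\pi)=1$, so~\eqref{eq:bound} holds trivially. Hence I may assume $\sum_{i=1}^{n}c_{min}^i>0$ and divide by it legitimately.

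Next I would record the ``upper'' half of the argument. Combining the right inequality of~\eqref{eq:plan_bounds} with the definition of $\eta_{eff}$ in~\eqref{eq:eta_eff}, and recalling that the $c_{min}^i$ are exactly the tightest lower bounds from $\Phi$ so that $\sum_{i=1}^n c_{min}^i = c_{min}^{\Phi}(\pi)$ by~\eqref{eq:plan_lower_bound}, I get
\[
c(\pi) \le \sum_{i=1}^{n} c_{max}^i = \eta_{eff}(\pi)\sum_{i=1}^{n} c_{min}^i = \eta_{eff}(\pi)\, c_{min}^{\Phi}(\pi).
\]

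The crux of the proof is the second inequality, $c_{min}^{\Phi}(\pi) \le c^*$, which is where the $\Phi$-optimality of $\pi$ meets the estimator guarantee. Let $\pi^* = \langle a_1^*,\dots,a_m^*\rangle$ be a true cost-optimal solution, so $c(\pi^*)=c^*$. Since $\pi$ is optimal w.r.t. $\Phi$, its plan lower bound is minimal over all solutions, hence $c_{min}^{\Phi}(\pi)=c_{\Phi}^* \le c_{min}^{\Phi}(\pi^*)$. Applying Assumption~\ref{assum:bounds} edge-by-edge along $\pi^*$ then gives $c_{min}^{\Phi}(\pi^*)=\sum_j c_{min,\Phi}^j \le \sum_j c(a_j^*) = c^*$, so that $c_{min}^{\Phi}(\pi)\le c^*$. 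Finally, multiplying this by $\eta_{eff}(\pi)\ge 1>0$ and substituting into the display above yields $c(\pi)\le \eta_{eff}(\pi)\, c_{min}^{\Phi}(\pi)\le \eta_{eff}(\pi)\, c^*$, which is~\eqref{eq:bound}.

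I expect the only real subtlety to be making this middle comparison explicit rather than routine: the bound does not follow from looking at $\pi$ alone, but from the two-step chain $c_{min}^{\Phi}(\pi)\le c_{min}^{\Phi}(\pi^*)\le c^*$, which needs both the global minimality of $\pi$'s lower bound (from being $\Phi$-optimal) and the per-edge validity of the lower bounds on the independently chosen optimal plan $\pi^*$. Beyond that, the main care is bookkeeping: identifying $\sum_i c_{min}^i$ with $c_{min}^{\Phi}(\pi)$, and treating the zero-cost case separately so that the division implicit in $\eta_{eff}$ is always well defined.
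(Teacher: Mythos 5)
Your proof is correct and follows essentially the same route as the paper's: bound $c(\pi)$ by $\eta_{eff}(\pi)\,c_{min}^{\Phi}(\pi)$ via the right inequality of~\eqref{eq:plan_bounds} and~\eqref{eq:eta_eff}, then bound $c_{min}^{\Phi}(\pi)$ by $c^*$ using $\Phi$-optimality. The only difference is that you make explicit what the paper compresses into ``by definition of optimality of $\pi$ w.r.t.\ $\Phi$''---namely the chain $c_{min}^{\Phi}(\pi)\le c_{min}^{\Phi}(\pi^*)\le c^*$ through a true optimal plan $\pi^*$---and you handle the $\sum_i c_{min}^i=0$ case separately, which the paper delegates to the convention in~\eqref{eq:eta_eff}.
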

\noindent \textit{Proof.} 
Denote by~$c_{min}^i, c_{max}^i$ the lower and upper bounds, for the cost of the $i$th action in~$\pi$.
By definition of optimality of~$\pi$ w.r.t. $\Phi$, it follows that
\begin{equation*}
	\sum_{i=1}^{n}c_{min}^i \leq c^*.
\end{equation*}
From the right inequality of~\eqref{eq:plan_bounds} and Equation~\eqref{eq:eta_eff}, we get
\begin{equation*}
	c(\pi) \leq \sum_{i=1}^{n}c_{max}^i = \sum_{i=1}^{n}c_{min}^i \times \eta_{eff}.
\end{equation*}
Substituting the former inequality into the latter yields Inequality~\eqref{eq:bound}.\hfill$\square$
\begin{corollary}
	Thm.~\ref{thm:bound} can be used to assure that an optimal plan $\pi$ w.r.t. any $\Phi \subseteq \Theta_\Sigma$ with $\eta_{eff}(\pi) \le \epsilon$ is indeed $\epsilon$-optimal.
	This also implies that algorithms can try to reduce planning time by choosing $\Phi$ that achieves this efficiently.
\end{corollary}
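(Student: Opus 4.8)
The plan is to obtain the first assertion as a one-line consequence of Theorem~\ref{thm:bound}, and to treat the concluding sentence as an algorithmic observation rather than a formal claim. The starting point is the bound guaranteed by Theorem~\ref{thm:bound}, which applies exactly because $\pi$ is assumed optimal w.r.t. $\Phi$: it gives $c(\pi) \le c^* \times \eta_{eff}(\pi)$. The corollary's hypothesis contributes the second ingredient, $\eta_{eff}(\pi) \le \epsilon$, so the whole argument amounts to composing these two inequalities correctly.

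First I would scale $\eta_{eff}(\pi) \le \epsilon$ by $c^*$, which preserves the inequality's direction because $c^* \ge 0$ (the optimal cost is a sum of action costs valued in $[0,\infty)$ by Definition~\ref{def:trans_sys}). Chaining then yields $c(\pi) \le c^* \times \eta_{eff}(\pi) \le c^* \times \epsilon$, which is precisely the defining relation of an $\epsilon$-optimal solution in Definition~\ref{def:new_problem}. To match that definition fully I would also verify its side condition $\epsilon \ge 1$: since the tightest bounds obey $c_{min}^i \le c_{max}^i$ for every action (Assumption~\ref{assum:bounds}), the ratio in Equation~\eqref{eq:eta_eff} satisfies $\eta_{eff}(\pi) \ge 1$, whence $\epsilon \ge \eta_{eff}(\pi) \ge 1$. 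For the degenerate case $c^* = 0$, the sum of lower bounds along the optimal plan vanishes, so by the convention following Equation~\eqref{eq:eta_eff} we take $\eta_{eff}(\pi) = 1$ and the bound $c(\pi) \le 0$ holds trivially, leaving the conclusion intact.

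The second sentence needs no computation: since $\epsilon$-optimality is certified solely through $\eta_{eff}(\pi)$, and this quantity depends only on the bounds returned by the estimators actually invoked---that is, on the chosen $\Phi$---any $\Phi$ for which $\eta_{eff}(\pi) \le \epsilon$ is acceptable, so an algorithm may freely select among such $\Phi$ to minimize aggregate estimation time without weakening the guarantee. I do not expect a genuine obstacle here; the only points demanding care are the use of $c^* \ge 0$ when scaling the inequality and the clean handling of the $c^* = 0$ boundary.
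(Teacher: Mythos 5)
Your proposal is correct and matches the paper's (implicit) argument exactly: the corollary is intended as an immediate consequence of Theorem~\ref{thm:bound}, obtained by chaining $c(\pi) \le c^* \times \eta_{eff}(\pi)$ with the hypothesis $\eta_{eff}(\pi) \le \epsilon$ and the nonnegativity of $c^*$. Your additional checks (that $\eta_{eff}(\pi) \ge 1$ forces $\epsilon \ge 1$ as Definition~\ref{def:new_problem} requires, and the $c^* = 0$ boundary case) are sound refinements of the same route, not a departure from it.
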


\subsection{Base Algorithm}\label{subsec:base_alg}

Consider two approaches for solving $\mathcal{P}$ from Def.~\ref{def:new_problem} based on cost-optimal forward states-space search:
\begin{enumerate}
	\item Perform standard planning using all the estimators available per edge whenever $c(e)$ is needed. We call this \emph{estimation-indifferent} planning. 
	\item Employ a \emph{fully lazy} approach w.r.t. edge cost estimation, by initially applying just one estimator per edge (whenever $c(e)$ is needed), until a plan is found. 
	Then, in case the requirement on the overall uncertainty of the plan is not met, improve the estimation of the first edge possible along the path implied by the plan, and start search from the beginning. Re-iterate until an $\epsilon$-bounded optimal solution is found or all estimation options are exhausted. 
\end{enumerate}
The first approach simply ignores the time needed for cost estimation, and thus can be expected to perform poorly in cases where considerable runtime could be saved on estimation, while the second approach can be seen to be wasteful in cases where significant time spent on search could have been saved.
Clearly, the approaches presented above are two extremes, hence it seems favorable to try to find a compromise between time spent on search and time spent on estimation.

\asec (\emph{$A^*$ with Synchronous Estimations of Costs}, Alg.~\ref{alg:ASEC}) is a modification of $A^*$, which balances search and estimation times.
It uses the required $\epsilon$-bound \emph{during the search}, as a threshold for allowed cost uncertainty. This lets it decide when to apply cost estimation, by comparing the $\epsilon$-bound to the accumulated~$\eta_{eff}$ along the current path examined.

\begin{algorithm}[htb]
	\caption{$A^*$ with Synchronous Estimations of Costs}
	\label{alg:ASEC}
	\textbf{Input}: Problem $\mathcal{P}=(\Sigma,\Theta_\Sigma,s_0,S_g)$, target~$\epsilon$ \\
	\textbf{Parameter}: Procedure $\mathsf{GetEstimator}(\cdot)$\\
	\textbf{Output}: Plan $\pi$, bound $\eta_{eff}$
	\begin{algorithmic}[1] 
		\STATE $g_{min}(s_0) \gets 0$; $g_{max}(s_0) \gets 0$ 
		\STATE OPEN $\gets \emptyset$; CLOSED $\gets \emptyset$
		\STATE Insert $s_0$ into OPEN with $f(s_0)=h(s_0)$
		\WHILE {OPEN $\neq \emptyset$}
		\STATE $n \gets$ best node from OPEN
		\IF {$Goal(n)$}
		\RETURN {$trace(n), g_{max}(n) / g_{min}(n)$}
		\ENDIF
		\STATE Insert $n$ into CLOSED 
		\FOR {\textbf{each} successor $s$ of $n$}
		\IF {$s$ not in OPEN $\cup$ CLOSED}
		\STATE $g_{min}(s) \gets \infty$
		\ENDIF
		\STATE $\eta_{eff} \gets \infty$; $\underline{g} \gets 0$
		\STATE $\theta \gets \mathsf{GetEstimator((n,s))}$
		\WHILE {$\eta_{eff} > \epsilon~\AND~\underline{g}< g_{min}(s)~\AND~\theta \neq \emptyset$}
		\STATE $\underline{c}, \bar{c} \gets$ apply$(\theta)$
		\STATE $\underline{g} \gets g_{min}(n) + \underline{c}$; $\bar{g} \gets g_{max}(n) + \bar{c}$
		\STATE $\eta_{eff} \gets \bar{g} / \underline{g}$
		\STATE $\theta \gets \mathsf{GetEstimator((n,s))}$
		\ENDWHILE
		\IF {$\underline{g} < g_{min}(s)$}
		\STATE $g_{min}(s) \gets \underline{g}$; $g_{max}(s) \gets \bar{g}$  
		\IF {$s$ in OPEN $\cup$ CLOSED}
		\STATE Remove $s$ from OPEN and CLOSED
		\ENDIF
		\STATE Insert $s$ into OPEN with $f(s)=g_{min}(s)+h(s)$
		\ENDIF
		\ENDFOR
		\ENDWHILE
		\RETURN {$\emptyset, \infty$}
	\end{algorithmic}
\end{algorithm}

\asec replaces the accumulated cost $g(\cdot)$ of $A^*$ with accumulated bounds $g_{min}(\cdot)$ and $g_{max}(\cdot)$.
$f$-values are computed based on lower bounds of costs, i.e., $f(s)=g_{min}(s)+h(s)$. 
Lines 10--20 replace the $A^*$ conditional step of adding $c(n,s)$ to $g(s)$, with a
 loop over possible cost estimators for the current edge, until the bound $\epsilon$ is met, a cheaper path exists, or no estimators are left.
$\mathsf{GetEstimator}(\cdot)$ is a procedure that receives an edge $e$ and returns an estimator for it (from the set $\Theta(e)$) that has not yet been applied.

\asec's theoretical guarantees (below) assume nothing about the ordering of the estimators selected by $\mathsf{GetEstimator}(\cdot)$. However,
to save runtime, one can impose an order on the set of estimators $\Theta((n,s))$, by increasing $\tau_i$. Then, each time $\mathsf{GetEstimator(\cdot)}$ is invoked on the edge $e=(n,s)$, it returns the estimator $\theta_i^e$ with the shortest runtime.  This is the approach we took in the experiments below.
Note that regardless of the estimators' order induced by $\mathsf{GetEstimator}(\cdot)$, the bounds for each edge can only be tightened with additional estimates (which is obtained from Assumption~\ref{assum:bounds} and by keeping the tightest bounds found).
Lastly, computation of the heuristic $h(\cdot)$ is based on lower bounds, obtained by using the (projected) cheapest estimations, i.e., for every relevant edge $e$ it uses the lower bound estimate given by the first estimator returned by $\mathsf{GetEstimator}(e)$.

\subsubsection{Analysis of \asec.}
We first clarify the meanings of completeness and soundness w.r.t. $\mathcal{P}$ from Def.~\ref{def:new_problem}.

\noindent \textbf{$\epsilon$-Completeness} An algorithm is said to be $\epsilon$-\emph{complete} if it guarantees to find an $\epsilon$-optimal plan, in case there exists a plan that can be verified to be $\epsilon$-optimal using the given set of estimators $\Theta_\Sigma$.

\noindent \textbf{$\epsilon$-Soundness} An algorithm is said to be $\epsilon$-\emph{sound} if every time it returns a plan reported as $\epsilon$-optimal, the plan returned is indeed $\epsilon$-optimal.

\asec employs a best-effort approach. It returns the first plan it finds, even if it is not $\epsilon$-optimal. However, when it returns an $\eta_{eff}$ that meets the bound, then the plan is guaranteed to satisfy $\epsilon$-optimality.

\begin{theorem}[$\epsilon$-Soundness]\label{thm:asec_sound}
	Provided with a consistent heuristic $h(\cdot)$, $\mathsf{ASEC}$ is $\epsilon$-sound.
\end{theorem}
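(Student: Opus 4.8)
The plan is to condition on the exact event that triggers the $\epsilon$-optimality report and reduce the claim to two numerical facts about the popped goal node. \asec reports a plan only when it selects a goal node $n$, returning $\pi=trace(n)$ together with $\eta_{eff}=g_{max}(n)/g_{min}(n)$, and declares it $\epsilon$-optimal precisely when this ratio satisfies $\eta_{eff}\le\epsilon$. I would therefore fix such an $n$ and establish, at the moment it is selected,
\begin{equation*}
c(\pi)\le g_{max}(n)\qquad\text{and}\qquad g_{min}(n)\le c^*.
\end{equation*}
Granting these, the result follows from the chain $c(\pi)\le g_{max}(n)=\eta_{eff}\cdot g_{min}(n)\le\epsilon\cdot g_{min}(n)\le\epsilon\cdot c^*$, which is exactly $\epsilon$-optimality. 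Conceptually this is Theorem~\ref{thm:bound} (Bound) and its corollary specialized to the set $\Phi$ of estimators \asec has actually applied: the reported ratio is $\eta_{eff}(\pi)$ with respect to those bounds, so once $\pi$ is an optimal plan w.r.t.\ $\Phi$ the corollary closes the argument. The degenerate case $g_{min}(n)=0$ forces $c(\pi)=0$ by Assumption~\ref{assum:bounds} and is handled by the convention $\eta_{eff}:=1$.

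The first inequality is a routine bookkeeping induction. By Assumption~\ref{assum:bounds} every upper bound returned for an edge satisfies $c_{max}\ge c(e)$, and \asec accumulates these via $g_{max}(s)\gets g_{max}(n)+\bar c$ while keeping the tightest bounds found. Inducting along the final trace---and using that reopening (lines 22--26) re-updates a child whenever its parent's values decrease, so the recorded values compose consistently from $s_0$ to $n$---gives $g_{max}(n)=\sum_i c_{max}^i\ge\sum_i c(a_i)=c(\pi)$.

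The second inequality, $g_{min}(n)\le c^*$, is the heart of the proof and is where consistency of $h$ is used; it is the generalization of the standard $A^*$ optimality property to \asec. I would let $\Phi$ denote the estimators applied by the time $n$ is popped and argue through the usual $A^*$ frontier invariant: at every stage there is a node $n'$ lying on some truly optimal plan $\pi^*$ that sits in OPEN with $g_{min}(n')$ no larger than the lower bounds accumulated along $\pi^*$ up to $n'$, which by Assumption~\ref{assum:bounds} is at most the true cost of that prefix. Since $h$ is consistent, hence admissible, $h(n')$ underestimates the true optimal cost-to-go from $n'$, so $f(n')=g_{min}(n')+h(n')\le c(\pi^*)=c^*$. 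As $n$ was selected as the minimum-$f$ node and $h(n)=0$ at a goal, $g_{min}(n)=f(n)\le f(n')\le c^*$.

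The main obstacle is making this invariant rigorous despite two features that break the textbook setting: estimation is \emph{lazy}, and edge lower bounds are \emph{not fixed} but increase as more estimators are applied. I would address this by checking three monotonicity facts. First, consistency is preserved under tightening, since raising an edge's $c_{min}$ only strengthens the inequality $h(n)\le c_{min}(e)+h(n')$, so $h$ stays consistent w.r.t.\ whatever bounds are currently in force. Second, applying only a prefix of an edge's estimators yields a \emph{looser} (smaller) lower bound, so $g_{min}$ never overestimates the accumulated tightest-bound cost along $\pi^*$---precisely the direction the invariant needs---and this also means the early termination of the inner loop (on $\eta_{eff}\le\epsilon$, on $\underline g\ge g_{min}(s)$, or on exhausting estimators) cannot violate it. Third, the reopening in lines 22--26 restores the invariant after any edge is retightened, exactly as node reopening preserves optimality for $A^*$ under a merely admissible heuristic. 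Verifying that these three facts combine to carry the frontier node forward along $\pi^*$ until the goal is selected is the one genuinely delicate step; the remainder is the standard argument.
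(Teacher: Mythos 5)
Your proof is correct and takes essentially the same route as the paper's: the paper also decomposes $\epsilon$-soundness into (i) correctness of the reported ratio $g_{max}(n)/g_{min}(n)$ as the plan's $\eta_{eff}$, (ii) an $A^*$-style optimality argument under lazily tightened lower bounds---whose key ingredient, consistency preservation under tightening, is exactly your first monotonicity fact and appears as Lemma~\ref{lem:consistency} in the technical appendix---and (iii) the algebraic chain of Theorem~\ref{thm:bound}. The only organizational difference is that you establish $g_{min}(n)\le c^*$ directly via the frontier invariant, whereas the paper phrases the same step as optimality of the returned plan w.r.t.\ the set $\Phi$ of applied estimators (Lemma~\ref{lem:phi_optimality}) and then invokes Theorem~\ref{thm:bound}; the content is the same.
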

\begin{proof}[Proof sketch]
	By induction on the order of nodes entering OPEN, starting from the base case of the start node, we show that each node $n$ in OPEN satisfies $\eta_{eff}(n) := g_{max}(n) / g_{min}(n)$ (where the case of $g_{min}(n)=0$ is considered $\eta_{eff}(n)=1$, as there is no uncertainty).
	Thus, if a plan $\pi$ is found, terminating at $s_g$, it necessarily means that $\eta_{eff}(s_g)=\eta_{eff}(\pi)$ is returned (i.e., the $\eta_{eff}$ returned is correct).
	In addition, using the same analysis of $A^*$ and a consistent $h(\cdot)$, $\pi$ is guaranteed to be optimal w.r.t. some $\Phi \subseteq \Theta_\Sigma$.
	Hence, if $\eta_{eff}(\pi) \leq \epsilon$ is satisfied, then relying on Thm.~\ref{thm:bound}, $\pi$ is an $\epsilon$-optimal plan, whereas in case it is not satisfied, then $\pi$ is not guaranteed by \asec to be $\epsilon$-optimal.
\end{proof}

However, not every plan returned will meet the target bound. It might even be that in generating the resulting plan, not all estimation options are exhausted, thus \asec could return a plan which does not satisfy $\epsilon$-optimality, while there exists a plan which does. Therefore it is \textit{not $\epsilon$-complete in the general case}. However, under some conditions, it is $\epsilon$-complete (Thm.~\ref{thm:asec_guar}).
Intuitively, if every edge can be estimated to the desired degree of certainty, then starting from a bound on the path cost which is under the threshold, and adding only edges with ``good enough'' cost estimates, it is possible to retain all paths explored with accumulated $\eta_{eff}$ lower or equal to $\epsilon$.
Then, the first solution found (if it exists) necessarily meets the requirement.

%
%
%

\begin{theorem}[Special $\epsilon$-Completeness]\label{thm:asec_guar}
	Given a problem $\mathcal{P}=(\Sigma,\Theta_\Sigma,s_0,S_g)$, with sub-optimality bound~$\epsilon$, if every edge $e \in \mathcal{E}$ can be estimated using $\Theta(e)$ such that $c_{max}(e) / c_{min}(e) \leq \epsilon$ (or $c_{min}(e)=c(e)=0$) holds, and a consistent heuristic $h(\cdot)$ is used, then $\mathsf{ASEC}$ is $\epsilon$-complete.
\end{theorem}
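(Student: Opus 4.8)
The plan is to prove a single invariant by induction on the order in which nodes are inserted into OPEN, and then combine it with the already-established Bound theorem (Thm.~\ref{thm:bound}) and the standard $A^*$ optimality argument invoked in the soundness proof. The invariant I would establish is: every node $s$ that is ever inserted into OPEN satisfies $g_{max}(s)/g_{min}(s) \le \epsilon$, with the usual convention that the ratio is $1$ when $g_{min}(s)=0$. The base case is immediate, since $g_{min}(s_0)=g_{max}(s_0)=0$ gives ratio $1 \le \epsilon$.

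For the inductive step I would consider the expansion of a node $n$ (which by the inductive hypothesis satisfies $g_{max}(n) \le \epsilon\, g_{min}(n)$) and a successor $s$ reached through edge $(n,s)$, and track the three ways the inner loop (Lines~16--21) can terminate, showing that $s$ is updated only when the accumulated ratio is already $\le \epsilon$. If the loop exits because $\eta_{eff} \le \epsilon$, the update at Line~23 sets $g_{max}(s)/g_{min}(s) = \bar g/\underline g = \eta_{eff} \le \epsilon$ directly. If it exits because $\underline g \ge g_{min}(s)$, then the guard $\underline g < g_{min}(s)$ fails and $s$ is not updated, so the invariant is preserved vacuously. The remaining case is exhaustion of estimators ($\theta=\emptyset$): here the hypothesis that every edge admits $c_{max}(e)/c_{min}(e)\le\epsilon$, together with the fact that bounds only tighten as estimators accumulate, guarantees $\bar c \le \epsilon\,\underline c$ for the fully estimated edge; combined with $g_{max}(n)\le\epsilon\,g_{min}(n)$ this yields, by the elementary mediant inequality ($a_1\le\epsilon b_1$ and $a_2\le\epsilon b_2$ imply $a_1+a_2\le\epsilon(b_1+b_2)$), that $g_{max}(s)=g_{max}(n)+\bar c \le \epsilon(g_{min}(n)+\underline c)=\epsilon\,g_{min}(s)$. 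Edges with $c_{min}(e)=c(e)=0$ add $0$ to both bounds and leave the ratio unchanged. This coupling of the per-edge hypothesis with the mediant inequality, and the case analysis of the loop's exit condition, is where I expect the real work to lie.

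With the invariant in hand the conclusion follows quickly. Since $\mathcal{P}$ has a solution and $\mathsf{ASEC}$ runs $A^*$ on the lower-bound weights $c_{min}$ under a consistent heuristic, the standard $A^*$ reasoning already cited in the proof of Thm.~\ref{thm:asec_sound} guarantees termination with a plan $\pi$ that is optimal w.r.t.\ the effective estimator set $\Phi$ actually applied, i.e.\ $c_{min}^{\Phi}(\pi)=c_{\Phi}^* \le c^*$. The goal node at which $\pi$ terminates is an OPEN node, so by the invariant $\eta_{eff}(\pi)=g_{max}/g_{min}\le\epsilon$, which is exactly the value returned. Applying Thm.~\ref{thm:bound} to $\pi$ then gives $c(\pi)\le c^* \times \eta_{eff}(\pi) \le c^* \times \epsilon$, so $\pi$ is $\epsilon$-optimal and $\mathsf{ASEC}$ is $\epsilon$-complete.

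One subtlety I would make sure to dispatch is termination/monotonicity: each reopening of a node strictly decreases its $g_{min}$, and since there are finitely many edges, estimators, and nodes, only finitely many distinct $g_{min}$ values can arise, so the search cannot loop forever. The harder conceptual point, as noted, is the inductive step---specifically arguing that the only branches of the loop leading to an update are precisely those that restore the ratio bound.
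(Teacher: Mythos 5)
Your proposal is correct and takes essentially the same route as the paper's proof: the same induction on the order of insertion into OPEN showing $g_{max}(n)/g_{min}(n)\le\epsilon$ for every OPEN node (via the per-edge estimability hypothesis), the same appeal to the $A^*$ analysis with a consistent heuristic for termination and for optimality of the returned plan w.r.t.\ the set $\Phi$ of estimators actually applied, and the same final application of Thm.~\ref{thm:bound}; your loop-exit case analysis and mediant inequality merely spell out details the paper's sketch leaves implicit. One shared imprecision worth noting: for edges with $c_{min}(e)=c(e)=0$ you assert they add $0$ to both bounds (i.e., that $c_{max}=0$ as well), which the hypothesis does not actually guarantee---the paper's sketch glosses over the same corner case with its ``$g_{min}(n)=0$ counts as $\eta_{eff}(n)=1$'' convention.
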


\begin{proof}[Proof sketch]
First, the fact that \asec terminates in finite time follows from the same analysis of $A^*$ and since there is a limited number of cost estimations per edge, where each of them has finite runtime.
Second, \asec is trivially complete in the regular sense, i.e., if a plan exists, then \asec necessarily returns such one (this again follows from the same analysis of $A^*$).
Third, similar to the proof of Thm.~\ref{thm:asec_sound}, and using the fact that for every edge $e$ the bound $c_{max}(e) / c_{min}(e) \leq \epsilon$ (or the equality $c_{min}(e)=c(e)=0$) can be achieved, it can be shown by induction that each node $n$ in OPEN satisfies $\eta_{eff}(n) = g_{max}(n) / g_{min}(n) \leq \epsilon$ (where again, $g_{min}(n)=0$ is considered as $\eta_{eff}(n)=1$).
Therefore, if an $\epsilon$-optimal plan exists, then \asec will necessarily return such one.
\end{proof}

\subsection{Post-Search Extension}\label{subsec:extensions}
We provide a simple post-search procedure (Alg.~\ref{alg:ESE}) that aims to reduce the uncertainty expressed by $\eta_{eff}$ in cases where the algorithm terminates with $\eta_{eff}(\pi) > \epsilon$. 
When \asec terminates, there may still be unused estimators for edges along the path that corresponds to the plan found.  Thus if $\eta_{eff}(\pi)$ doesn't meet the $\epsilon$ requirement, then we may try to reduce it until the condition is met, without continuing the search.

The basic idea is to loop over all the edges along the path corresponding to the plan found, and for each one try to reduce its uncertainty by applying unused estimators.
We note that the tightest lower bound we may use to calculate $\eta_{eff}(\pi)$, cannot be greater than the lowest $f$-value of nodes in OPEN, as it should reflect a correct lower bound to the goal, and it is possible that there exists a different plan $\tilde{\pi}$ with $c_{min}(\tilde{\pi})$ equal to that $f$-value. This is the purpose of comparing the newest lower bound $c_{min}(\pi)$ to $g_{min}(alt)$ and taking the minimum between them.

\begin{algorithm}[htb]
	\caption{End of Search Estimations (\ese)}
	\label{alg:ESE}
	\textbf{Input}: \asec's inputs and variables before termination\\
	\textbf{Parameter}: Procedure $\mathsf{GetEstimator}(\cdot)$\\
	\textbf{Output}: Bound $\eta_{eff}$
	\begin{algorithmic}[1] 
		\STATE $alt \gets$ best node from OPEN
		\FOR {\textbf{each} edge $e$ that corresponds to an action from $\pi$}
		\STATE $\theta \gets \mathsf{GetEstimator(e)}$
		\WHILE {$\eta_{eff} > \epsilon~\AND~\theta \neq \emptyset$}
		\STATE $\underline{c}, \bar{c} \gets$ apply$(\theta)$
		\STATE update $c_{min}(\pi), c_{max}(\pi)$ using $\underline{c}, \bar{c}$
		\IF {$g_{min}(alt) < c_{min}(\pi)$}
		\STATE $\eta_{eff} \gets c_{max}(\pi) / g_{min}(alt)$
		\ELSE
		\STATE $\eta_{eff} \gets c_{max}(\pi) / c_{min}(\pi)$
		\ENDIF
		\STATE $\theta \gets \mathsf{GetEstimator(e)}$
		\ENDWHILE
		\IF {$\eta_{eff} \leq \epsilon$}
		\RETURN {$\eta_{eff}$} 
		\ENDIF
		\ENDFOR
		\RETURN {$\eta_{eff}$}
	\end{algorithmic}
\end{algorithm}

\section{Empirical Evaluation}\label{sec:emp}
The theoretical properties of \asec give little insight as to its runtime behavior and expected success rate. 
We therefore conduct extensive experiments to evaluate \asec in a variety of benchmark planning problems where we control the need for estimation and target sub-optimality bound.
Our experiments are carried out using \emph{PlanDEM} (Planning with Dynamically Estimated Action Models), a planner that provides a concrete implementation for \asec and \ese, 
in C++. PlanDEM modifies and extends Fast Downward (FD)~\cite{helmert-jair2006} (v20.06), and inherits its capabilities, including the $h_{max}$ heuristic~\cite{bonet2001planning} 
used in the experiments. 
The underlying FD search algorithm and data structures were modified appropriately. 
All experiments were run on an Intel i7-6600U CPU (2.6GHz), with 16GB of RAM, in Linux.

We generated planning problems by using standard problems from previous planning competitions, modified to include synthetic estimators at several levels of uncertainty and computational cost. Specifically, we selected 20 domains \& problems with action costs~\cite{fox2003pddl2} (see the technical appendix).
For each, we generated a random variant with a target $\epsilon$, and varying the number of actions with estimated (rather than precise) costs with a probability $p_1$.
When we set $p_1=1$, the costs of all ground actions (edges in $\mathcal{G}_\Sigma$) are estimated by the planner. When $p_1=0$, all costs are given precisely. When $0<p_1<1$, only some ground actions require estimation of the cost. 
Thus a pair $\epsilon, p_1$ creates a new instance of a benchmark problem.

Each estimated-cost edge $e$ with precise cost $c(e)$ is provided with a set of three estimators of $c(e)$, 
$\Theta(e)=\{\theta_1^e, \theta_2^e, \theta_3^e\}$. 
These have $c_{max}(\theta_i^e)/c_{min}(\theta_i^e)$ uncertainty ratios of 4, 2, and 1 (the correct value), resp. 


Since the problems we address have a target bound $\epsilon$, but no other solution ranking criteria, we choose to prefer time over smaller uncertainty as a secondary ranking criterion.
Given two $\epsilon$-optimal solutions $\pi_1, \pi_2$ with corresponding $\eta_{eff}^1, \eta_{eff}^2$ and runtimes $t^1, t^2$, that satisfy $\eta_{eff}^1 < \eta_{eff}^2$ and $t^1 > t^2$, we consider $\pi_2$ to be better.

\subsection{Evaluation of \asec.}\label{sub:ASEC}
We take $p_1$ to be one of $\{0.01, 0.05, 0.1, 0.25, 0.5, 0.75, 1\}$ and $\epsilon$ values in the range $[1,4]$ in jumps of 0.25, resulting in a total of 1820 runs.
Note that $\epsilon=1$ requires precise cost, while $\epsilon=4$ is equivalent in our setting to having no requirement at all on precision (as the highest uncertainty ratio given by an estimator is 4). As one of the estimators has uncertainty ratio of 1, the cost of each edge can be eventually estimated perfectly, and thus Thm.~\ref{thm:asec_guar} holds (i.e., the achievable $\eta_{eff}$ is 1).  Therefore,  \asec will always terminate with an $\epsilon$-optimal solution.

\subsubsection{Comparison to Baseline.}\label{subsub:alternatives}
As the planning problem defined in this paper is novel, there is no baseline for comparison other than the relatively simple approaches suggested earlier: \emph{estimation-indifferent} planning utilizing all available estimators, and \emph{fully lazy} planning re-iterating the search until an $\epsilon$-optimal solution is found.  This latter approach proved to be several orders of magnitude slower than the other two on the problems tested. The only scenario where it might be a contender is in the case of  very small scale problems, with extremely long average cost estimation runtime. This may be relevant for some applications, but it seems too limiting for the general case.  

We therefore contrast PlanDEM with estimation-indifferent planning.
We begin by examining \asec's runtime behavior from the perspective of estimators utilization. In particular, as every edge considered has to be estimated at least once, we focus on
the number of expensive estimators, i.e., (those with tighter ratios 2, 1).

Fig.~\ref{plot:estimations} plots the ratio between the actual number of expensive estimations that \asec used and the maximum number of potential expensive estimations on the vertical axis, against the target $\epsilon$ bound. Thus a lower ratio indicates an improved result, i.e., less expensive estimations used. 
The estimation-indifferent planning approach always uses all estimators, and therefore its ratio is always 1,  indicated by the
the straight solid horizontal black line at  the top of the figure. The figure shows different curves for the several $p_1$ values. Each point on each curve averages 20 planning runs. The curves of $p_1=\{0.01, 0.05\}$ were left out for clarity. 


\begin{figure}[htb]
	\centering
	\includegraphics[width=0.8\columnwidth]{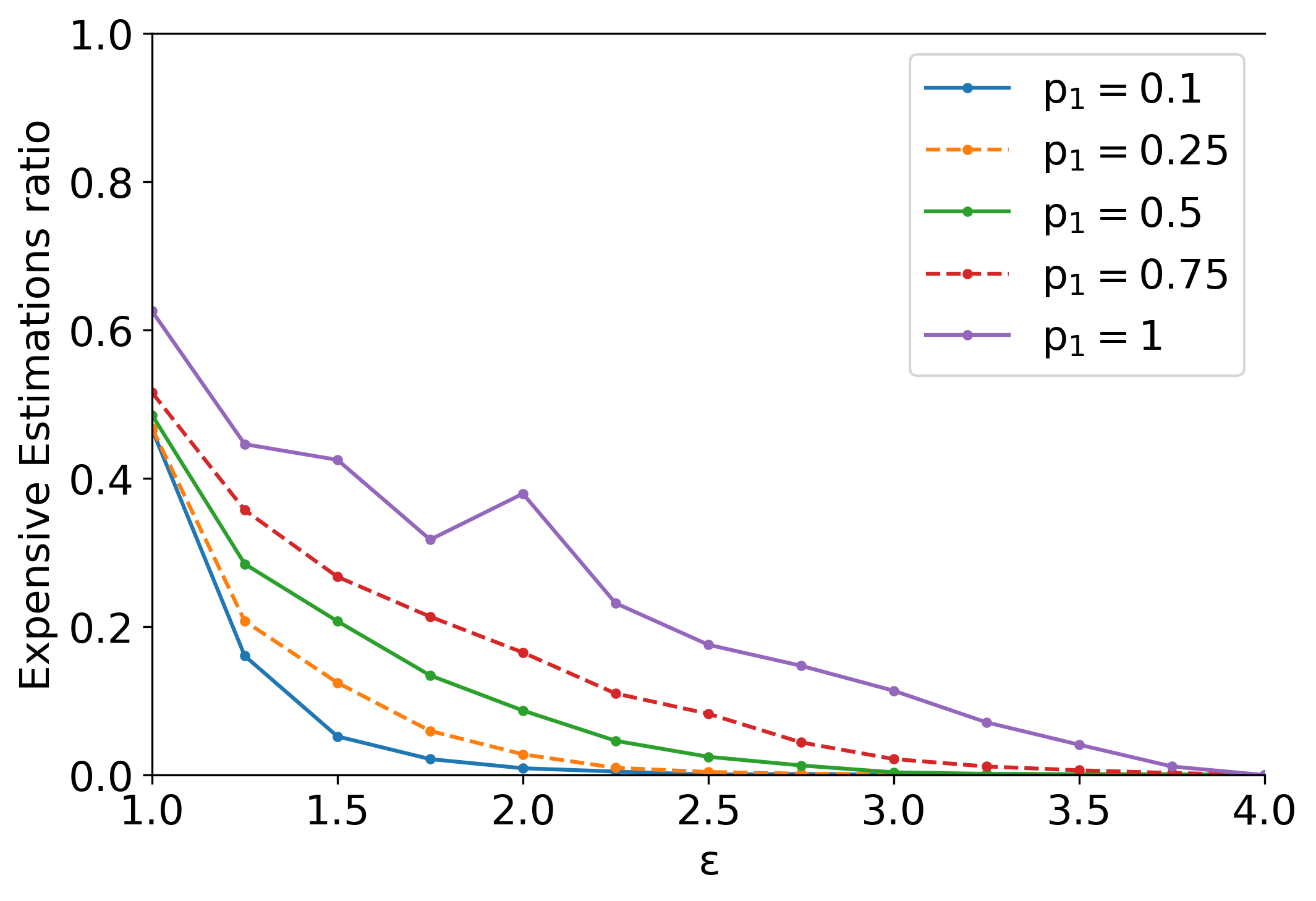} 
	\caption{Normalized expensive estimations versus target bound $\epsilon$. 
		 The curves from the bottom (blue, solid), 
		 to the top (purple, solid) correspond to the $p_1$ values $0.1, 0.25, 0.5, 0.75$ and $1$. Success rate is 100\% for all runs.}
	\label{plot:estimations}
\end{figure}

We first compare the estimation-indifferent approach with \asec. Consider the worst-case where the cost of \emph{all} actions is estimated ($p_1=1$, solid purple), and the target bound requires maximum accuracy ($\epsilon=1$). Here, \asec utilizes only 62\% of the expensive estimators. For lower $p_1$, it improves up to 46\% ($p_1=0.1$). This is due to the condition on $\underline{g}$ in line 14 of \asec, which avoids unnecessary estimations in case another path is examined that leads to a node that is already in OPEN and has lower $f$-value.  Without sacrificing accuracy, \asec is superior to its basic competitor that uses full estimation (indicated as the straight line of ratio 1).

Next, the generally decreasing trend of the curves in Fig.~\ref{plot:estimations} 
suggests that substantial savings may be achieved even by minor relaxation of the uncertainty bound (allowing greater $\epsilon$).
Furthermore, as $p_1$ decreases the curves drop rapidly, allowing \asec to further increase its efficiency.

\subsubsection{Runtime and Quality}\label{subsub:runtime_quality}
Figure~\ref{plot:runtime} shows actual and approximate planning runtime (measured in CPU seconds), on the vertical axis (logarithmic scale). As the time taken by expensive estimators is
arbitrary in these experiments, we wanted to get a feel for the planning runtime under various assumptions of computation time. The bottom curve in the figure is the actual runtime of
PlanDEM without the estimation. The other curves, bottom-to-top, show what the runtime would be had we added the estimation runtime (number of expensive estimates multiplied
by assumed per-estimate runtime).  This emphasizes the importance of avoiding unnecessary estimations, which dominate the total runtime when estimation time is high (compare to the heuristic estimation time, contained within the baseline).  It also strengthens the argument mentioned above that significant potential savings can be obtained by even slightly loosening the uncertainty bound.

\begin{figure}[htb]
	\centering
	\includegraphics[width=0.8\columnwidth]{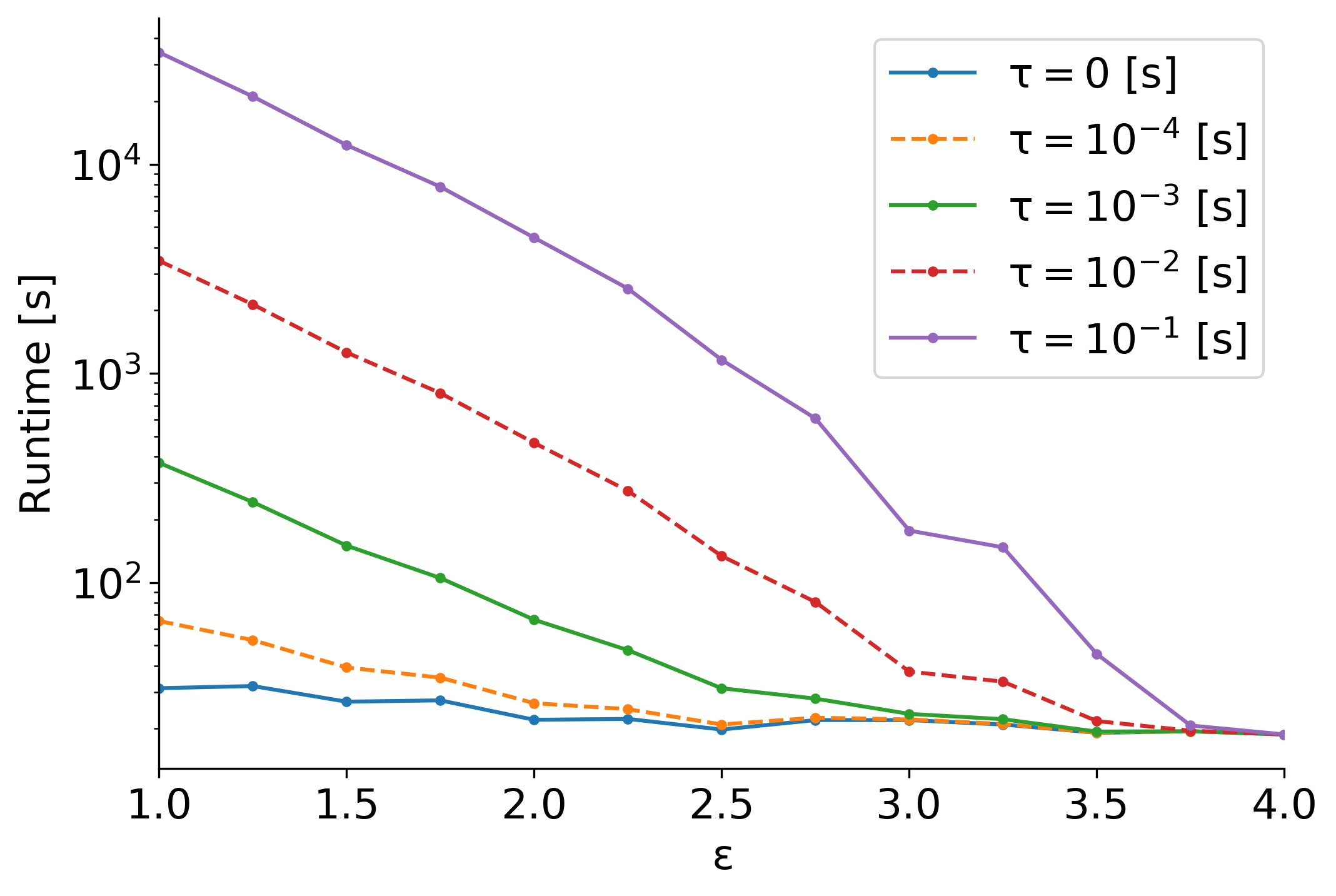} 
	\caption{PlanDEM runtime in CPU seconds. Bottom (blue, solid) 
		to top (purple, solid) curves correspond to per-estimation times of $0, 10^{-4}, 10^{-3}, 10^{-2}$ and $10^{-1}$ seconds for the expensive estimators. $p_1=0.5$ for all cases.}
	\label{plot:runtime}
\end{figure}

We now move to examine the quality of the solutions produced by \asec.  Figure~\ref{plot:eta} shows mean $\eta_{eff}$ values achieved for different target $\epsilon$ bounds. 
Recall that we consider shorter planning time to be better than achieving higher accuracy.
Hence, the fact that $\eta_{eff}$ tends to $\epsilon$ when $p_1$ is high indicates that \asec is efficient w.r.t. usage of expensive estimations.
When $p_1$ values are low, most costs are precisely known, and overall uncertainty should be low even with a small amount of expensive estimations (see Fig.~\ref{plot:estimations}). 
Indeed, $\eta_{eff}$ values are well below the target bounds. 

\begin{figure}[htb]
	\centering
	\includegraphics[width=0.8\columnwidth]{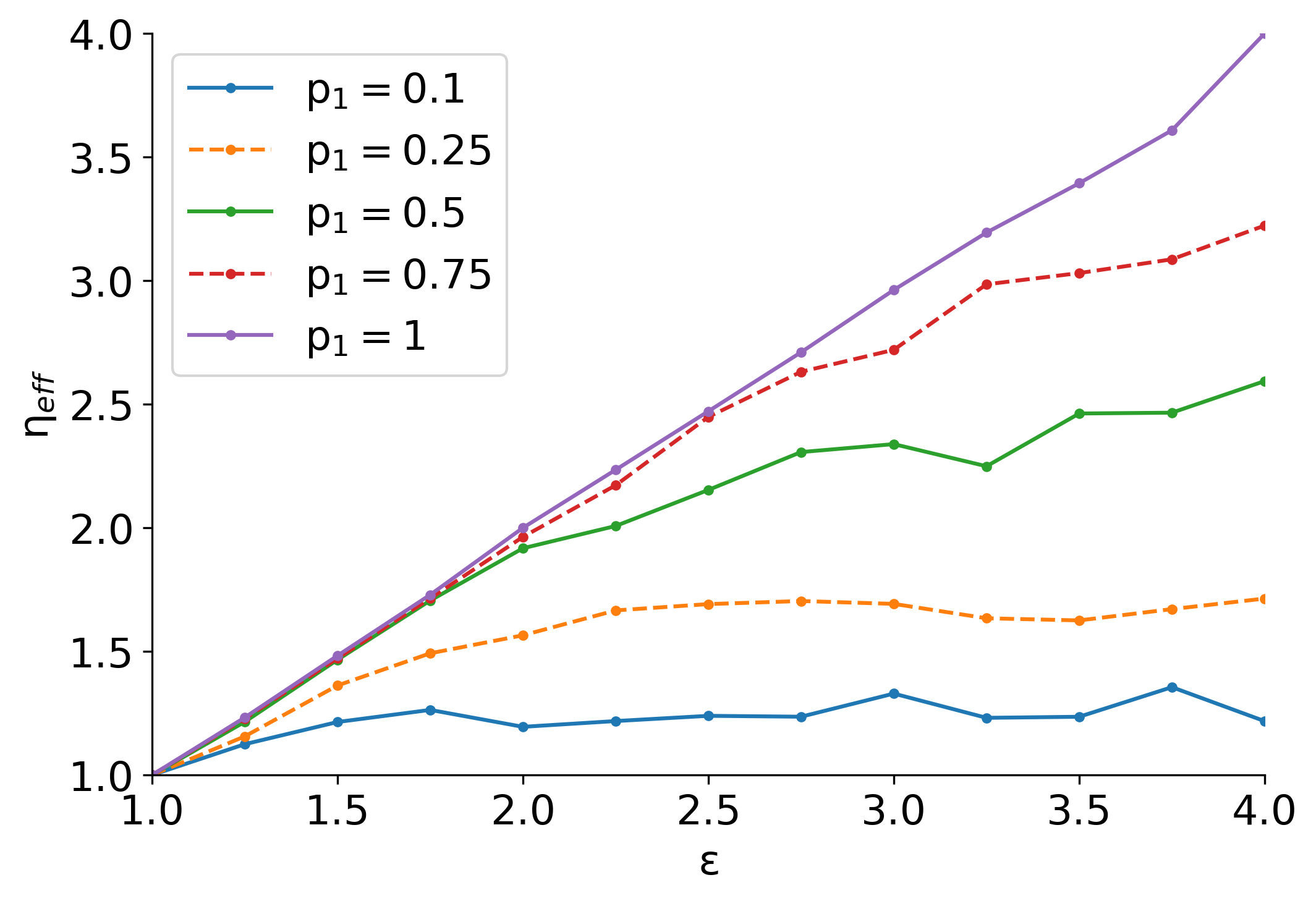} 
	\caption{Mean $\eta_{eff}$ vs $\epsilon$ target bounds. The curves bottom (blue, solid) 
		to top (purple, solid) correspond to the $p_1$ values $0.1, 0.25, 0.5, 0.75$ and $1$.}
	\label{plot:eta}
\end{figure}

\subsubsection{Diminishing Marginal Improvement}\label{subsub:DMI}
The characteristic shape of the curves in Fig.~\ref{plot:estimations}, is intriguing. We analyze the accumulated $\eta_{eff}$ that \asec uses as threshold for estimation improvement.
First, we introduce notation that separates an edge $e^j$ along a path examined by \asec.
\begin{equation}\label{eq:DMR_notation}
	\eta_{eff} = \frac{\sum_{i\neq j}c_{max}^i + c_{max}^j}{\sum_{i\neq j}c_{min}^i + c_{min}^j} := \frac{N + \alpha}{D + \beta}.
\end{equation}
Second, we examine the effect of estimation improvement for the cost $c(e^j)$.
We note that improvement means that the upper bound $\alpha$ decreases and/or the lower bound $\beta$ increases, resulting in an overall decrease of $\eta_{eff}$.
Taking partial derivatives with respect to the bounds gives:
\begin{equation}\label{eq:partial_derivatives}
	\frac{\partial \eta_{eff}}{\partial \alpha} = \frac{1}{D + \beta}~,~\frac{\partial \eta_{eff}}{\partial \beta} = \frac{-(N + \alpha)}{(D + \beta)^2}.
\end{equation}
It can be seen that increasing $\beta$ makes both derivatives smaller in absolute value, namely it reduces the effect that shrinking the bounds of $c(e^j)$ has on $\eta_{eff}$.
Similarly, decreasing $\alpha$ makes $|\frac{\partial \eta_{eff}}{\partial \beta}|$ smaller, hence the effect that improving $\beta$ has on $\eta_{eff}$ is diminished.
On the other hand, changing $\alpha$ does not affect $\frac{\partial \eta_{eff}}{\partial \alpha}$, implying that two consecutive identical improvements in $\alpha$ have the same affect on $\eta_{eff}$ (i.e., a fixed $\Delta \alpha$ causes a fixed $\Delta \eta_{eff}$). 
However, notice that every additional improvement of the uncertainty in $c(e^j)$ by a factor of $\delta > 1$, namely $\eta_k(c(e^j))/\eta_{k+1}(c(e^j)) = \delta$, means a smaller absolute change in the bounds, as
\begin{equation}\label{eq:def_alpha}
	\alpha_1 = \alpha_2 \times \delta = \alpha_3 \times \delta^2,
\end{equation}
translates to
\begin{equation}\label{eq:ineq_alpha}
	\alpha_1 - \alpha_2 = \alpha_3 \times \delta \times (\delta - 1) > \alpha_3 \times (\delta - 1) = \alpha_2 - \alpha_3.
\end{equation}
Equations~\eqref{eq:def_alpha} and \eqref{eq:ineq_alpha} imply that every additional improvement of the upper bound $\alpha$ by an identical factor has a smaller effect on the accumulated $\eta_{eff}$.

To sum up, there is a diminishing marginal return effect in the improvement of $\eta_{eff}$ due to better estimation of an action's cost.
The implication is that requiring a looser target $\epsilon$ indirectly enables \asec to utilize less expensive estimations, according to a \emph{convex curve}.
Changing $\epsilon$ also affects the order of node expansion, which in turn affects the number of estimations, hence this analysis is approximate.
Nevertheless, it seems to explain the curves in Fig.~\ref{plot:estimations}.

%
%

\subsection{Evaluation of \ese.}\label{sub:ESE}
The section above examines \asec when it can always succeed in finding a solution, and thus \ese is never invoked. This section examines cases where it might not.
We introduce probabilities for the existence of the second and third estimators, denoted by $p_2$ and $p_3$. 
When setting either $p_2$ or $p_3$ to values different than 1, the achievable $\eta_{eff}$ increases, as tighter estimates are no longer available.
This creates cases where \asec can fail, and \ese might be applicable, and will be invoked.
For our experiment we used $p_1, p_2 \in \{0.25, 0.5, 0.75, 1\}$, $p_3 \in \{0.25, 0.5, 0.75\}$ and $\epsilon \in \{1.5, 2, 2.5 ,3 ,3.5\}$. This resulted in 4800 
planning problem variants to be solved. 

The results are summarized in Table~\ref{table:ESE}. Each row corresponds to a different $\epsilon$ target, and summarizes 960 problems. 
Left to right, the columns show the number of problems (in parentheses, percentage of problems) in which \asec succeeded, \ese was applicable and invoked, \ese was successful, the resulting
$\eta_{eff}$ and usage of costly estimations for both \asec and \ese.  The table points to several insights.


\begin{table*}[htb]
	\centering
	\begin{tabular}{r r r r r r} 
		$\epsilon$ & \asec Success (\%) & \ese Invoked (\%) & \ese Success (\%) & $\eta_{eff}$ \asec (\ese) & Costly Estimations \asec (\ese)\\
		\hline
		1.5 & 276 (28.75) & 174 (18.13) & 50 (28.74) & 2.23 (1.98) & 132566045 (511) \\ 
		2   & 548 (57.08) & 171 (17.81) & 59 (34.50) & 2.63 (2.14) & 145826527 (602) \\
		2.5 & 701 (73.02) & 145 (15.10) & 52 (35.86) & 2.95 (2.33) & 136260264 (544) \\
		3   & 806 (83.96) & 97 (10.10)& 55 (56.70) & 3.29 (2.51) & 93636149 (382) \\
		3.5 & 897 (93.44) & 38 ~~(3.96) & 23 (60.53) & 3.67 (2.68) & 29996783 (144) \\
	\end{tabular}
	\caption{Summarized data of \asec \& \ese experiments (960 runs per $\epsilon$ value). The values in column 5 are averages of $\eta_{eff}$ over all instances where \ese took place, and the values in column 6 are sums of costly estimations over all those instances.}
	\label{table:ESE}
\end{table*}

First, the number of \ese invocations is not equal to the number of \asec failures. There are cases where \asec exhausted all estimation options for the actions in the plan
and thus \ese is not applicable.  As the success ratio of \asec increases with $\epsilon$, the number of \ese invocations decreases (as this procedure is required less frequently).

Second, a key insight is that \ese has a considerable success ratio, which increases with $\epsilon$.  This is striking,
given that it utilizes only a small amount of estimates w.r.t. the amount \asec uses, i.e., its affect on the total runtime is minor.

To explain this result, we examine the change in $\eta_{eff}$ due to \ese.
The data in column 5 from the table indicates values averaged over both successful and failed runs (when the bound is not met), without normalization. 
We therefore examine the normalized relative change in $\eta_{eff}$, defined as
\begin{equation*}
	\eta_{rel}:=(\eta_{eff}^{\mathsf{ESE}}-\eta_{eff}^{\mathsf{ASEC}})/(\eta_{eff}^{\mathsf{ASEC}}-1),
\end{equation*}
where the value 1 in the denominator, which is the best (lowest) achievable $\eta_{eff}$, serves as to normalize the change.

The mean normalized relative change is only  $-5.72\%$. 
This implies that \asec typically comes very close to the target (up to roughly several percent). It then becomes relatively easy for \ese to use a few more estimates, to drive $\eta_{eff}$ below the target.



\section{Related Work and Discussion}
\asec allows the planner to decide between multiple estimators for action costs. This is reminiscent of
work on multi-fidelity behaviors~\cite{winner00}, where a behavior-based robot selects between different versions of the same
behavior (controller) depending on uncertainty and value of information. In addition to our focus on planning rather than reactive behavior selection, we also provide a formal framework, with guarantees, for the selection process.  

\asec's repeated evaluation and selection between cost estimators is also related to the  problem of choosing between multiple heuristic functions during the planning process~\cite{karpas2018rational}.  A key difference is that heuristics are applied to states (vertices in the state-space graph), while costs apply to actions (edges). Therefore, the two approaches are complementary. 

In general graph search, cost uncertainty is considered by assuming explicit or implicit knowledge about edge cost distributions, and then solved by performing calculations involving the full graph, typically minimizing expectation~\cite{kwon2013robust, shahabi2015robust}. \asec makes no such assumptions.

Two related efforts tackle graph search in the context of motion planning, where edge evaluation (specifically, edge existence verification) is expensive~\cite{narayanan2017heuristic, mandalika2019generalized}. 
 Both efforts suggest using coordination mechanisms that decide when to search and when to evaluate edges, so as to minimize overall planning time. This approach is specific to motion planning, and does not lend itself to general planning.
In particular, they build a full graph description of the state space (corresponding to the problem state space), an intractable process in general planning problems.  Additionally, they consider edge evaluation as a binary function (an edge either exists or not), whereas \asec assumes multiple numeric estimations, thus an edge may be evaluated more than once. 

As mentioned earlier, \asec's theoretical properties hold without any assumptions on the order of the estimators given by $\mathsf{GetEstimator}(\cdot)$. 
Nevertheless, \asec is designed to take advantage of estimators organized according to an increasing computational burden.
\emph{This means that even partial information on the expected runtime of estimators---which is oftentimes a realistic assumption---can be utilized, as the more $\mathsf{GetEstimator}(\cdot)$ adheres to the desired order, the better the results of \asec (i.e., it is becomes more efficient).}

An underlying assumption of this paper is that cost estimation bounds can be obtained.
Clearly, in many cases bounds that are both strict and meaningful may not be available, e.g., when using statistical estimators (such as neural networks).
Nevertheless, various statistical measures of the estimations (notably measured standard deviations) can be used instead as bounds, in which case the plan accuracy bound should be considered soft (or approximate) rather than strict.

The ideas proposed in this paper are part of a broader vision to improve real-world planning, by removing barriers between problem modeling and planning.
Allowing the planner to choose and invoke external computational modules during the planning process in order to acquire numerical attributes that characterize the problem (such as action costs), while retaining the problem structure obtained by declarative and symbolic approaches, provides the best of both worlds:
\begin{itemize}
	\item Any kind of computational procedure can be used to estimate numeric model parameters. In particular, there are no restrictions on the type of data nor on mathematical operations, including black-box estimators such as neural networks. 
	\item The uncertainty level of the problem description can be controlled (limited only by the best achievable estimators) to trade-off accuracy vs. computation time. 
	\item Domain independent planners can still be applied.
\end{itemize}
Consequently, considerable improvement in problem modeling can be achieved, along with the generality and performance given by state-of-the-art domain independent planning technology.

\section{Conclusions}
This paper introduced a generalization of deterministic planning to include action cost uncertainty, quantified through cost estimators, and applied dynamically during the planning phase.  As estimation processes are often expensive in terms of runtime, it is preferable to let the planner use them on demand, to save unnecessary computations. We introduced  \asec---a generalization of $A^*$---and showed that it is sound but incomplete (in general), though it is complete under some conditions. We then introduced a post-search procedure that increases \asec's success ratio by utilizing unused estimators. Extensive experiments show the algorithms select estimators so as to reduce estimation runtime.

However, much remains to be done.  Figure~\ref{plot:runtime} shows how the runtime of the planner would change, depending on the computation time of the estimates. The figure shows that even when the requirement for external estimation occurs in only half the ground actions, the planning runtime can be
completely dominated by the estimation process, when the target bound $\epsilon$ on the cost uncertainty tends to be tight (lower). Thus, an important direction for future research is to improve the selection of estimators so as to minimize their use. We are considering the use of 
meta-information, such as relying on predicated values for the bounds $c_{min}, c_{max}$ and/or $\tau$ prior to the actual application of an estimator. 

\section*{Acknowledgements}
The research was partially funded by ISF Grant \#2306/18 and BSF-NSF grant 2017764. Thanks to K. Ushi. 
E.W. is supported by the Adams Fellowship Program of the Israel Academy of Sciences
and Humanities and by Bar-Ilan University's President Scholarship.

\bibliography{short,weiss_references} 

\section{Technical Appendix}
We provide a few more implementation-related details to facilitate in reproducing the results reported in the paper, we supplement the proof sketches of Theorems 3 and 4, and we present an experiment with real data that further illustrates our approach with a concrete example.

\subsection{Domains and Problems Used in the Experiments}
For the empirical evaluation we used domains and problems that appeared in the international planning competitions of 2008, 2011, 2014 and 2018.
For each domain we chose two problems (ranging from small to large scale), and for each of them we tested all configurations reported in the paper (resulting in many different variants per original problem file).
The full list of domains (problems) follows below. The files can be retrieved conveniently from the online archive https://github.com/aibasel/downward-benchmarks.

\begin{itemize}
 \item  elevators-opt08-strips (p04, p06)
 \item  barman-opt11-strips (pfile01-003, pfile01-004) 
 \item floortile-opt11-strips (opt-p05-009, opt-p06-011)
 \item sokoban-opt11-strips (p04, p07)
 \item transport-opt11-strips (p02, p04)
 \item  woodworking-opt11-strips (p06, p12)
 \item tetris-opt14-strips (p03-4, p04-6)
 \item agricola-opt18-strips (p08, p10)
 \item  caldera-split-opt18-adl (p05, p10)
 \item data-network-opt18-strips (p17, p20)     

\end{itemize}        

\subsection{Costs and Estimators}
We generally used the original cost depicted in the PDDL domain and problem files, denoted as $c_{PDDL}(e)$.  However, PlanDEM supports solely positive integer action costs, as it extends FD and thus inherits its main software architecture and data structures.  As a result, for problems that have $c(e)=1$, it is not possible to bound an estimate from below. In these cases we changed the original cost so as to
be able to experiment with estimators that provide a bound lower than the original cost.

Thus, in order to obtain the desired uncertainty ratios for each edge $e$, we used  $c_{PDDL}(e)$ as a basis for the following transformation.
\begin{itemize}
    \item If the edge cost was to be estimated (i.e., with probability $p_1$ as described in the experiment section of the paper, then the list of estimators is 
        given by $\Theta(e)=\{\theta_1^e, \theta_2^e, \theta_3^e\}$, where the estimators respectively return
	\begin{align*}
		c_{min}^1=1 \times c_{PDDL}(e), c_{max}^1=4 \times c_{PDDL}(e),\\
		c_{min}^2=2 \times c_{PDDL}(e), c_{max}^2=4 \times c_{PDDL}(e),\\
		c_{min}^3=2 \times c_{PDDL}(e), c_{max}^3=2 \times c_{PDDL}(e). 
	\end{align*}
	In this case, the true cost is taken to be $c(e) = 2 \times c_{PDDL}(e)$.
\item Otherwise (true cost known and not estimated): $\Theta(e)=\{\theta_1^e\}$, where $\theta_1^e$ returns
	\begin{equation*}
		c_{max}^1=c_{min}^1=c_{PDDL}(e). 
	\end{equation*}
	 Hence, in this case $c(e) = c_{PDDL}(e)$.
\end{itemize}

\subsection{Supplements to Proofs}

\setcounter{table}{1}
\begin{table*}[htb]
	\centering
	\begin{tabular}{r r r r r r r r r r r} 
		$\epsilon$ & Success & $\eta_{eff}$ & $\theta_2$ Used & $\theta_2$ Saved & $\theta_2$ Usage (\%) & $T_0$[s] & $T_1$[s] & Saved $T_1$[s] & $T_{10}$[s] & Saved $T_{10}$[s] \\
		\hline
		1 & 0 & 1.18 & 2537169 &3858085 & 39.67 & 54.12 & 2591.29&3858.09 &25425.81 &38580.85  \\ 
		1.1   & 0 & 1.18 & 2537169 & 3858085 & 39.67 & 61.89 &2599.06 &3858.09 &25433.58 & 38580.85 \\
		1.2 & 1 & 1.18 & 2565641 & 4254910 & 37.62 & 66.16 &2631.80 &4254.91 & 25722.57&42549.10 \\
		1.3   & 1 & 1.28& 2653524 & 7091775 & 27.23 & 93.93 &2747.46 & 7091.78&26629.17 &70917.75 \\
		1.4 & 1 & 1.39 & 2469987 & 8222062 & 23.10  & 101.26&2571.25 &8222.06 &24801.13 &82220.62 \\
		1.5 & 1 & 1.47 & 2253078 & 8587468 & 20.78  & 102.10&2355.18 &8587.47 &22632.88 &85874.68 \\ 
		1.6   & 1 & 1.58 & 2047551 & 8909325 & 18.69  &106.90 &2154.45 &8909.33 &20582.41 & 89093.25\\
		1.7 & 1 & 1.66 & 1809278 & 8843618 & 16.98  &105.22 & 1914.50& 8843.62&18198.00 &88436.18 \\
		1.8   & 1 & 1.76& 1508847 & 9375424 & 13.86  &94.54 & 1603.39&9375.42 &15183.01 &93754.24 \\
		1.9 & 1 & 1.88 & 1149769 & 8486273 & 11.93  &72.78 &1222.55 & 8486.27&11570.47 &84862.73 \\
		2 & 1 & 1.96 & 722353 & 8102761 & 8.19  &84.37 &806.72 & 8102.76&7307.90 &81027.61 \\ 
		2.1   & 1 & 2.03 & 237430 & 6863791 & 3.34  &69.62 &307.05 &6863.79 & 2443.92&68637.91 \\
		2.2 & 1 & 2.05 & 733 & 6349392 & 0.01 & 60.23& 60.96&6349.39 &67.56 &63493.92 \\
		2.3   & 1 & 2.16& 31 & 6350040 & $5 \times 10^{-4}$ &61.55 &61.58 &6350.04 &61.86 &63500.40 \\
		2.4 & 1 & 2.24 & 7 & 6350479 & $10^{-4}$ &60.59 & 60.59&6350.48 &60.66 & 63504.79\\
	\end{tabular}
	\caption{Summarized data of \asec experiments with the Ontario Transportation Logistic problem. Notations: $\theta_2$ denotes the costly estimations, and $T_0, T_1, T_{10}$ correspond to planning times using $\tau_2=0,1,10$ ms, respectively. Usage and savings are w.r.t. to the results of estimation-indifferent planning.}
	\label{table:Ontatio}
\end{table*}

The proof sketches of Theorems 3 and 4 rely on the following argument.
\begin{lemma}[$\Phi$-Optimality]\label{lem:phi_optimality}
Provided with a consistent heuristic $h(\cdot)$, \asec necessarily returns an optimal plan w.r.t. some $\Phi \subseteq \Theta_\Sigma$.
\end{lemma}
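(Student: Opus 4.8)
The plan is to show that \asec is nothing more than $A^*$ run on the graph $\mathcal{G}_\Sigma$ with a particular, \emph{a posteriori} edge-weight function, and then to invoke the standard optimality guarantee of $A^*$ under a consistent heuristic. Concretely, I would fix an arbitrary terminating run of \asec and let $\Phi \subseteq \Theta_\Sigma$ collect, for every edge $e$ that was evaluated during the run, the estimators that \asec applied to $e$; for every edge never touched by the search, let $\Phi$ contain the single cheapest estimator (the first one returned by $\mathsf{GetEstimator}$), so that $\Phi$ indeed holds at least one estimator per edge. Define the weight $w(e) := c_{min,\Phi}(e)$, the tightest lower bound obtainable from the estimators that $\Phi$ assigns to $e$. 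With this choice, the least-weight solution path under $w$ has cost exactly $c_\Phi^*$, and any plan $\pi$ attaining it satisfies $c_{min}^\Phi(\pi)=c_\Phi^*$, i.e.\ it is optimal w.r.t.\ $\Phi$.

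First I would verify that \asec's bookkeeping coincides with $A^*$ under the weights $w$. The surrogate cost $g_{min}(\cdot)$ accumulates exactly the lower bounds $w(e)$ along a path, the priority $f(s)=g_{min}(s)+h(s)$ is precisely the $A^*$ evaluation function $g+h$, and the update/re-open block (lines~21--26) is the usual $A^*$ edge relaxation. The estimator loop (lines~14--20) only \emph{determines} the weight $w((n,s))=\underline{c}$ that is then fed into this relaxation; once \asec keeps the tightest bound found, $w(e)$ is well defined. Next I would check that the consistent heuristic $h(\cdot)$ remains admissible, and in fact consistent, with respect to $w$. Since $h$ is built from the cheapest-estimator lower bounds and, by Assumption~\ref{assum:bounds} together with the tightest-bound convention, every applied lower bound only ever \emph{increases} relative to those cheapest values, we have that $w(e)$ is at least the cheapest lower bound for each $e$; hence the optimal cost-to-go under $w$ dominates the one $h$ was designed to under-estimate, so $h$ stays admissible (and the triangle inequality defining consistency is preserved because only the right-hand edge terms grow). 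Finally I would apply the textbook $A^*$ result: with an admissible/consistent $h$ and weight function $w$, the plan traced back when a goal is first extracted as the best node of OPEN is a least-$w$-cost solution, which is the desired optimal plan w.r.t.\ $\Phi$.

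The main obstacle is reconciling the lazy, path-dependent way in which \asec reveals edge weights with the single fixed weight function that the $A^*$ optimality theorem presumes. Two points need care. First, the number of estimators \asec applies to an edge depends on the accumulated $\eta_{eff}$ of the path along which the edge is reached and on the competing bound $g_{min}(s)$ (line~15), so the weight is not declared up front; the key is that the bounds can only be \emph{tightened} monotonically, so taking $w(e)$ to be the final tightest lower bound yields a consistent, search-invariant weight. Second, \asec may re-open CLOSED nodes (lines~22--24), so an edge could in principle be re-evaluated; I would argue that because $w$ only increases under tightening while $h$ was fixed from the cheapest estimates, admissibility---and thus the correctness of the first-extracted goal being optimal---is never violated, and that monotone tightening guarantees the final $g_{min}$ values equal the true shortest-path distances under $w$. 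Establishing this equivalence rigorously, rather than the subsequent appeal to $A^*$'s optimality, is where the real work lies.
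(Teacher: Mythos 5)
Your proposal follows essentially the same route as the paper: you define the effective weight function from the tightest lower bounds revealed during the run (the paper's $\beta(\cdot)$), observe that the heuristic computed from the cheapest-estimator bounds (the paper's $\alpha(\cdot)$) remains consistent because edge weights only grow relative to those used in its computation (this is exactly the paper's Consistency Preservation lemma, which you argue inline), and then invoke the standard $A^*$ optimality argument under a consistent heuristic. Your explicit handling of edges never touched by the search and your flagging of the re-opening/lazy-weight subtlety are slightly more careful than the paper's sketch, but the decomposition and key ideas are the same.
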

In order to prove Lemma~\ref{lem:phi_optimality} we require another auxiliary result, which demonstrates that heuristic consistency is preserved when using lower bounds of edge costs for its calculation (instead of the true edge costs).
Note that often the term heuristic function, in the context of graph search (or AI planning), is an abuse of notation used refer to a more general computational procedure, that is parameterized by the edge costs of the graph (or by the action costs in the planning problem), so that only after fixing the costs, a standard heuristic is obtained. 
Further note that it is typical to attribute a theoretical property, such as consistency, to such a computational procedure, if the property holds for any heuristic obtained from the procedure after fixing the costs, regardless of their specific values.
For stating our result, we need to differentiate between the two. 
Hence, we call the more general computational procedure a parameterized heuristic and denote it as usual by $h(\cdot)$, and we denote a standard heuristic obtained by it using a subscript, e.g., $h_c(\cdot)$, where $c(\cdot)$ is a cost function defining the costs of the problem.


\begin{lemma}[Consistency Preservation]\label{lem:consistency}
Given a digraph $\mathcal{G} = (\mathcal{V}, \mathcal{E})$ and two edge cost functions $\alpha : \mathcal{E} \to [0,\infty),~\beta : \mathcal{E} \to [0,\infty)$ that satisfy $\alpha(e) \leq \beta(e)$ for every edge $e \in \mathcal{E}$, we obtain the weighted digraphs $\mathcal{G}_{\alpha} = (\mathcal{V}, \mathcal{E}, \alpha(\cdot)),~\mathcal{G}_{\beta} = (\mathcal{V}, \mathcal{E}, \beta(\cdot))$.
If $h(\cdot)$ is a consistent parameterized heuristic, then $h_{\alpha}(\cdot)$ is consistent w.r.t. $\mathcal{G}_{\beta}$.
\end{lemma}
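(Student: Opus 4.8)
The plan is to reduce the claim to a one-line monotonicity argument built directly on the definition of consistency. First I would pin down the definition in the instantiated sense: a heuristic $\hat h(\cdot)$ (a function of vertices only) is \emph{consistent} with respect to a weighted digraph carrying cost function $c(\cdot)$ if $\hat h(g)=0$ for every goal vertex $g$ and $\hat h(u)\le c(u,v)+\hat h(v)$ for every edge $(u,v)\in\mathcal{E}$. The crucial observation, and the reason the paper separates parameterized from instantiated heuristics, is that $h_{\alpha}(\cdot)$ is a \emph{fixed} heuristic: each value $h_{\alpha}(v)$ is a number determined once the cost function is set to $\alpha$, and these values do not change when we subsequently test consistency against a different weighted digraph such as $\mathcal{G}_{\beta}$, which shares the very same $\mathcal{V}$ and $\mathcal{E}$.

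Next I would invoke the hypothesis. Because $h(\cdot)$ is a consistent \emph{parameterized} heuristic, by the convention recalled in the excerpt every instantiation is consistent with respect to its own cost function; in particular $h_{\alpha}(\cdot)$ is consistent with respect to $\mathcal{G}_{\alpha}$. This hands me two facts for free: the goal condition $h_{\alpha}(g)=0$ for all goal vertices $g$, and the triangle inequality $h_{\alpha}(u)\le \alpha(u,v)+h_{\alpha}(v)$ for every edge $(u,v)$.

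The key step is then simply to relax the edge weights. For an arbitrary edge $(u,v)$, the assumption $\alpha(e)\le\beta(e)$ gives $\alpha(u,v)\le\beta(u,v)$, so chaining yields
\begin{equation*}
  h_{\alpha}(u)\le \alpha(u,v)+h_{\alpha}(v)\le \beta(u,v)+h_{\alpha}(v),
\end{equation*}
which is exactly the consistency inequality for $h_{\alpha}(\cdot)$ evaluated on $\mathcal{G}_{\beta}$. The goal condition transfers verbatim, since the goal vertices are identical in $\mathcal{G}_{\alpha}$ and $\mathcal{G}_{\beta}$ and $h_{\alpha}(g)$ does not depend on which graph we use for the consistency test. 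Hence $h_{\alpha}(\cdot)$ is consistent with respect to $\mathcal{G}_{\beta}$.

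I expect no deep mathematical obstacle here; the content is pure monotonicity. The only point demanding care---and precisely the motivation for the parameterized-versus-instantiated distinction---is bookkeeping: one must keep the heuristic values pinned to the cost function $\alpha$ while swapping only the edge weights used in the consistency test over to $\beta$. Conflating the two cost functions (for instance, inadvertently recomputing $h$ with costs $\beta$, giving $h_{\beta}$ instead of $h_{\alpha}$) would change the statement entirely, so the main effort goes into stating the definition crisply enough that the relaxation step reads unambiguously.
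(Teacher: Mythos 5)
Your proposal is correct and follows essentially the same argument as the paper's proof: instantiate the parameterized heuristic with $\alpha$ to get consistency w.r.t. $\mathcal{G}_{\alpha}$, chain the triangle inequality with $\alpha(e)\le\beta(e)$ to obtain the consistency inequality for $\mathcal{G}_{\beta}$, and observe that the goal condition transfers since the goal vertices coincide. The only difference is presentational---you make the ``$h_{\alpha}$ stays fixed while only the test graph changes'' bookkeeping explicit, which the paper leaves implicit.
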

\begin{proof}[Proof of Lemma~\ref{lem:consistency}]
Due to the fact that $h(\cdot)$ is a consistent parameterized heuristic, it immediately follows that $h_{\alpha}(\cdot)$ is consistent w.r.t. $\mathcal{G}_{\alpha}$. 
From the definition of consistency, this means that $h_{\alpha}(n) \leq \alpha((n,s)) + h_{\alpha}(s)$ and $h_{\alpha}(G)=0$ is satisfied for every node $n$ and every descendant $s$ of $n$, and every goal node $G$ in $\mathcal{G}_{\alpha}$.
Using $\alpha(e) \leq \beta(e)$ we obtain $h_{\alpha}(n) \leq \beta((n,s)) + h_{\alpha}(s)$, where again, this holds for every node $n$ and every descendant $s$ of $n$.
Additionally, every goal node $G$ in $\mathcal{G}_{\alpha}$ is also a goal node in $\mathcal{G}_{\beta}$, so $h_{\alpha}(G)=0$ is also satisfied for every goal node in $\mathcal{G}_{\beta}$.
Thus, $h_{\alpha}(\cdot)$ satisfies the definition of consistency w.r.t. $\mathcal{G}_{\beta}$.
\end{proof}
We can now prove Lemma~\ref{lem:phi_optimality}.
\begin{proof}[Proof of Lemma~\ref{lem:phi_optimality}]
As mentioned before, \asec uses heuristic values that are computed by $h(\cdot)$ (which is a consistent parameterized heuristic) and based on the lower bound estimate given for each relevant edge $e$ by the first estimator returned by $\mathsf{GetEstimator}(e)$. 
We obtain a (standard) consistent heuristic $h_{\alpha}(\cdot)$, with $\alpha(\cdot)$ defined to be an edge cost function, where the cost of each edge $e$ is taken to be the lower bound estimate of the first estimator returned by $\mathsf{GetEstimator}(e)$.
During the search \asec potentially utilizes additional estimations (as it tries to meet the bound $\epsilon$).
Denote by $\Phi \subseteq \Theta_\Sigma$ the set that includes exclusively all estimators invoked by \asec during the search, and further denote $\beta(\cdot)$ as an edge cost function, where the cost of each edge $e$ is taken to be the tightest (highest) lower bound estimate of all the estimators returned by $\mathsf{GetEstimator}(e)$ during the search.
Since $\alpha(e) \leq \beta(e)$ for every edge $e$, Lemma~\ref{lem:consistency} assures that $h_{\alpha}(\cdot)$ is consistent w.r.t. $\mathcal{G}_{\beta}$ (where its nodes and edges are defined by the digraph that corresponds to the planning problem $\mathcal{P}$).
Then, following the main proof arguments of $A^*$'s optimality, consistency of $h_{\alpha}(\cdot)$ implies non-decreasing $f$-values along any path, thus whenever \asec selects a node for expansion, the optimal path (w.r.t. $\beta(\cdot)$) to that node has been found. 
Hence, the first plan $\pi$ found by \asec has plan lower bound $c_{min}^{\Phi}(\pi)$ equal to the optimal plan lower bound w.r.t. $\Phi$, i.e., $c_{min}^{\Phi}(\pi)=c_{\Phi}^*$, implying that $\pi$ is an optimal plan w.r.t. $\Phi$.
\end{proof}

\subsection{Ontario Transportation Logistic Example}

Continuing the motivating example, we compiled a small but representative logistics transportation problem, based on real data relevant to Ontario, Canada.
The problem is to find a plan for delivery of packages in Ontario using trucks, while minimizing monetary cost, subject to the inherent uncertainty of travel times.
We next describe the technical details of the experiment.

The road system considers 9 cities with 30 road segments connecting them, where each road length is specified using a 100 meter resolution, where the data is taken from https://www.google.com/maps.
The cost function per action $a$ depends on the distance traveled $d$ and on time duration $t$, and is given by $c(a)=c_1 \times d + c_2 \times t$, where $c_1=0.56$ [USD/km] and $c_2=0.5$ [USD/minute] are the cost-per-km and cost-per-minute coefficients. $c_1$ was taken from https://carcosts.caa.ca/ and is based on data tailored to a specific set of assumptions: Ford Transit 150, 2021 model, with base crew medium sized roof slide 148WB AWD trim, commuting 30000 km annually, in Ontario province. It calculates an average total cost-per-km based on the above specifications, considering current fuel prices, depreciation and maintenance costs, license and registration fees, insurance costs and even financing. $c_2$ is based on an estimate of a truck driver employer's cost in Canada (see e.g., https://www.glassdoor.ca/Salaries/index.htm).

The domain file is similar to the aforementioned "Transport Sequential", having three action templates: drive, pick-up and drop, where the prices of the two latter actions are assumed to be exact and are based on a 20 minutes time duration.
Each drive action has two estimators: the first, which doesn't incur additional runtime, assumes nothing about the specific road segment and thus uses conservative estimates of 20 km/hour and 100 km/hour for unfavorable and favorable road conditions, respectively; and the second uses pessimistic and optimistic travel time estimates taken from https://www.google.com/maps by manual queries for a specific day and hour (Friday, 10 AM) and the fastest route.
All the required data for reproducing the experiment is contained in 3 files, corresponding to the domain, problem and estimators, attached together with the technical appendix.

We evaluated \asec on the problem for $\epsilon \in \{1, 1.1 ... , 2.4\}$, and the results are summarized in Table~\ref{table:Ontatio}.
As can be seen, the results are in full correspondence with those obtained by experimenting with the modified IPC benchmarks.
We highlight several interesting conclusions from the experiment:
\begin{itemize}
	\item \asec saved between 60\% and $\sim$100\% costly estimations compared to estimation-indifferent planning, while maintaining a 100\% success rate on feasible requirements (as $\epsilon=1,1.1$ turned out to be out of reach). 
	\item The best attainable estimates had uncertainty ratios ranging from 8.5\% to 53.8\%, demonstrating that cost uncertainty is oftentimes unavoidable. In these cases standard planning would generate inferior plans.
	\item Assuming a 1ms query time for travel time prediction (representing access time to a local database), and even in the worst result of only 60\% savings, planning time is 43 minutes instead of 107 minutes for estimation-indifferent planning. For query times of 10ms (representing a very fast access to an online database), this translates to 7 hours of planning time instead of 18 hours.
\end{itemize} 

We point out that this experiment was based on a pre-compiled file containing the required estimation data, so that during planning cost estimates were obtained by repeated access to the file, with many ground actions having the same cost estimates.
In such cases where multiple actions (or edges) have the same estimate, using a cache mechanism, in order to reduce calls to external modules, should significantly improve the results. 
The design and implementation of an effective cache is left for future research.

\end{document}